\newtheorem{theorem}{\bf Theorem}
\newtheorem{lemma}{\bf Lemma}
\title{\LARGE \bf
Rationally Inattentive Path-Planning via RRT*
}
\author{Jeb Stefan$^{1}$, Ali Reza Pedram$^{2}$, Riku Funada$^{3}$ and Takashi Tanaka$^{3}$ 
\thanks{*This work is supported by Lockheed Martin Corporation.}
\thanks{$^{1}$Odyssey Space Research.
        {\tt\small jeb.stefan@odysseysr.com}.
        $^{2}$Walker Department of Mechanical Engineering, University of Texas at Austin.
        {\tt\small apedram@utexas.edu}. $^{3}$Department of Aerospace Engineering and Engineering Mechanics, University of Texas at Austin.
        {\tt\small riku.funada@austin.utexas.edu} and  {\tt\small ttanaka@utexas.edu}. }%
}
\begin{document}

\maketitle
\thispagestyle{empty}
\pagestyle{empty}

\begin{abstract}

We consider a path-planning scenario for a mobile robot traveling in a configuration space with obstacles under the presence of stochastic disturbances.
A novel path length metric is proposed on the uncertain configuration space and then integrated with the existing RRT* algorithm. The metric is a weighted sum of two terms which capture both the Euclidean distance traveled by the robot and the perception cost, i.e., the amount of information the robot must perceive about the environment to follow the path safely. The continuity of the path length function with respect to the topology of the total variation metric is shown and the optimality of the Rationally Inattentive RRT* algorithm is discussed. Three numerical studies are presented which display the utility of the new algorithm.

\end{abstract}

\section{Introduction}
\label{sec:Intro}
As robots are designed to be more self-reliant in navigating complex and stochastic environments, it is sensible for the strategic execution of perception/cognition tasks to be included in the theory which governs their path-planning \cite{Pendleton2017,Pfeiffer2017,Carlone2019}. Even though the body of work surrounding motion planning techniques has greatly expanded recently, a technological gap remains in the integration of perception concerns into planning tasks \cite{Alterovitz2016}. Mitigating this gap is paramount to missions which require robots to autonomously complete tasks when sensing actions carry high costs (battery power, computing constraints, etc). 

Path-planning is typically followed by feedback control design, which is executed during the path following phase. In the current practice, path-planning and path-following are usually discussed separately (notable exceptions include \cite{kuwata2008motion, van2011lqg, agha2014firm}), and the cost of feedback control (perception cost in particular) is not incorporated in the path-planning phase. 
The first objective of this work is to fill this gap by introducing a novel path cost function which incorporates the expected perception cost accrued during path-following into the planning phase. 
This cost jointly penalizes the amount of sensing needed to follow a path and the distance traveled. Our approach is closely related to the concept of \emph{rationally inattentive (RI) control} \cite{Sims2003} (topic from macroeconomics which has recently been applied in control theory \cite{Shafieepoorfard2016,Shafieepoorfard2013}). The aim of rationally inattentive control is to jointly design the control and sensing policies such that the least amount of information (measured in \emph{bits}) is collected about the environment in order to achieve the desired control. 

The second objective of this work is to integrate the proposed path length function with an existing sampling-based algorithm, such as Rapidly-Exploring Random Trees (RRT) \cite{Lavalle2006}. The RRT algorithm is suited for this problem as it has been shown to find feasible paths in motion planning problems quickly. A modified version of this algorithm, RRT* \cite{Karaman2010}, will be utilized as it has the additional property of being asymptotically optimal. 
We develop an RRT*-like algorithm incorporating the proposed path length function (called the RI-RRT* algorithm) and demonstrate its effectiveness.

While the practical utility of the proposed framework must be thoroughly studied in the future, its expected impact is displayed in Fig.~\ref{fig:intro_example}. 
This figure shows the example of a robot moving through the two-dimensional, obstacle-filled environment.
Path A (red) represents the path from the origin to target location which minimizes the Euclidean distance. However, this path requires a large number of sensor actuations to keep the robot's spatial uncertainty small and avoid colliding with obstacles. Alternatively, Path B (blue) allows for the covariance to safely grow more along the path. Although the Path B travels a greater Euclidean distance to reach the target, it is cheaper in the information-theoretic sense as it requires fewer sensing actions. Therefore, if the perception cost is weighed more than the travel cost, Path B is characterized as the shortest path in the proposed path planning framework. We will demonstrate this effect in a numerical simulation in Section~\ref{sec:2D_mult}.

The proposed concept of rationally inattentive path-planning provides insight into the mathematical modeling of human experts' skills in path planning \cite{marquez2008design}, especially in terms of an efficiency-simplicity trade-off. 
Several path-planning algorithms have been proposed in the literature that are capable of enhancing path simplicity; this list includes potential field approaches \cite{Hwang1992}, multi-resolution perception and path-planning \cite{Kambhampati1986,Hauer2015}, and safe path-planning \cite{Lambert2003,Pepy2006}.
The information-theoretic distance function we introduce in this paper can be thought of as an alternative measure of path simplicity, which may provide a suitable modeling of the human intuition for simplicity in planning. In our standard, a path which requires less sensor information during the path-following phase is more ``simple;" Path B in Fig.~\ref{fig:intro_example} is simpler than Path A, and the simplest path is that which is traceable by an open-loop control policy. 

The contributions of this paper are summarized as follows:
\begin{itemize}
    \item{A novel path cost (RI cost) is formulated which jointly accounts for travel distance and perception cost.}
    \item{The continuity of the path cost with respect to the topology of the total variation metric is shown in the single dimensional case, which is a step forward to guaranteeing the asymptotic optimality of sampling-based algorithms.}
    \item{An RRT*-like algorithm is produced implementing the RI path-planning concept.}
\end{itemize}
\begin{figure}[t]
\centering
\fbox{\includegraphics[width = 0.7\columnwidth]{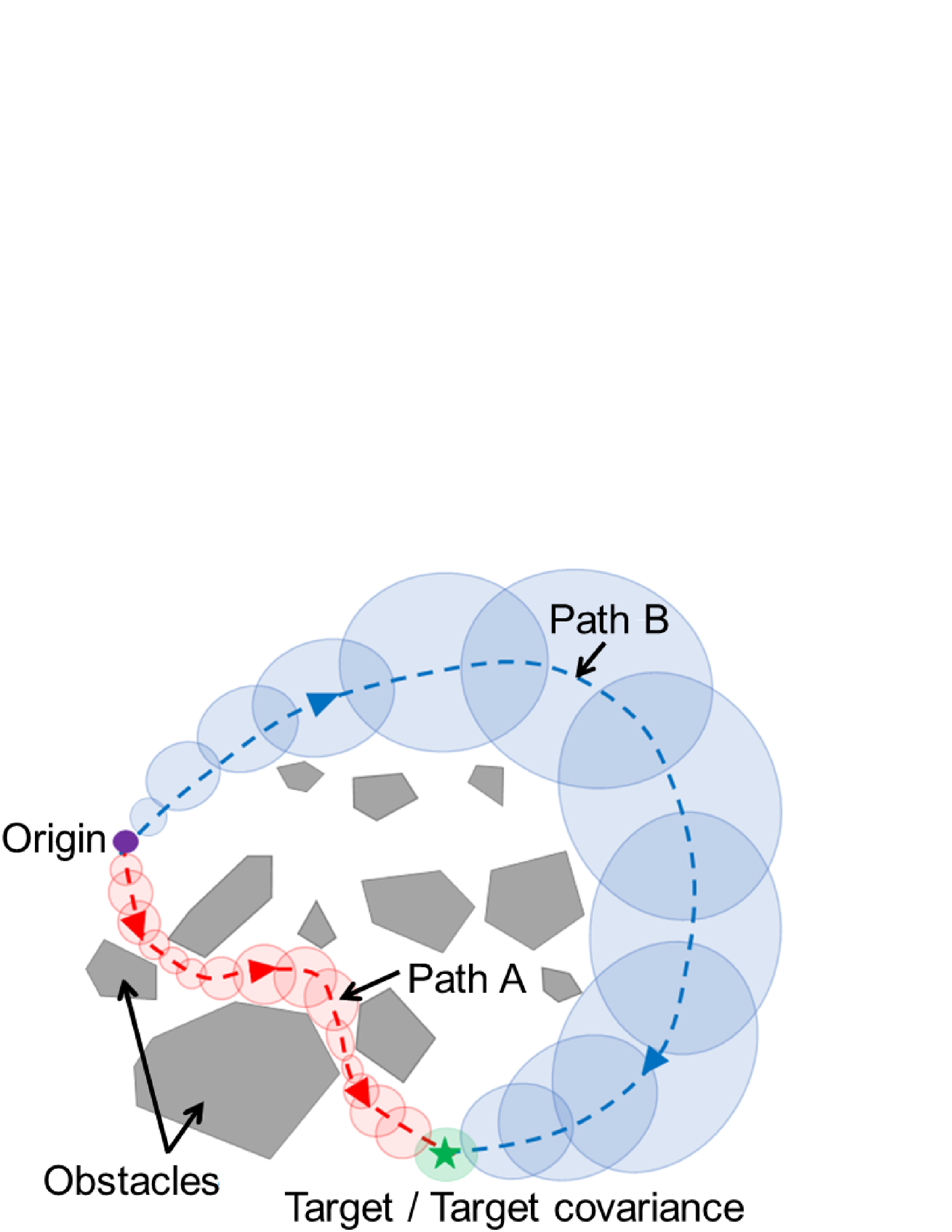}}
\caption{Example of an autonomous robot navigating a two-dimensional configuration space with obstacles. The goal of the robot is to reach the target location. As it moves, the uncertainty of the robot's exact location in the environment grows, represented by the varying sized covariance ellipses.}
\label{fig:intro_example}
\end{figure}

\emph{Notation}: For the purpose of this work, the following definitions for vectors (lower case) and matrices (upper case) hold: $\mathbb{S}^d=\left\{P\in\mathbb{R}^{d \times d}: P \text{ is symmetric.} \right\}, \mathbb{S}_{++}^d=\left\{P\in\mathbb{S}^d: P\succ 0 \right\}$, $\mathbb{S}_\epsilon^d=\left\{P\in\mathbb{S}^d: P\succeq \epsilon I \right\}$, and bold symbols such as $\bm{x}$ represent random variables. 
The vector 2-norm is $\| \cdot \|$ and $\| \cdot \|_{F}$ is Frobenius norm. 
The maximum singular values of a matrix $M$ is denoted by $\bar{\sigma}(M)$. 

\section{Preliminary Material}
\label{sec:Prelim}
In this paper, we consider a path-planning problem for a mobile robot with dynamics given by model (\ref{eq:ito}). Let $\bm{x}(t)$ be a $\mathbb{R}^d$-valued random process representing the robot's position at time $t$, given by the controlled Ito process:
\begin{equation}
\label{eq:ito}
\begin{split}
    &d \bm{x}(t) = \bm{v}(t)dt+W^{\frac{1}{2}} d \bm{b}(t),
\end{split}
\end{equation}
with $\bm{x}(0)\sim\mathcal{N}(x_0, P_0)$ and $t\in [0,T]$. Here, $\bm{v}(t)$ is the velocity input command, $\bm{b}(t)$ is the $d$-dimensional standard Brownian motion, and $W$ is a given positive definite matrix used in modeling the process noise intensity. We assume that the robot is commanded to travel at a unit velocity (i.e., $\|\bm{v}(t)\|=1$). Let $\mathcal{P}=(0=t_0<t_1< \cdots < t_N=T)$ be a partition of $[0,T]$, which must not necessarily be of equal spacing. Time discretization of \eqref{eq:ito} based on the Euler-Maruyama method \cite{Kloeden1992} yields:
\begin{equation}
\label{eq:euler}
    \bm{x}(t_{k+1})=\bm{x}(t_k)+\bm{v}(t_k)\Delta t_k+\bm{n}(t_k),
\end{equation}
where $\Delta t_k=t_{k+1}-t_k$ and $\bm{n}(t_k)\sim\mathcal{N}(0, \Delta t_k W)$.
Introducing a new control input $\bm{u}(t_k):=\bm{v}(t_k)\Delta t_k$ and applying the constraint $\|\bm{v}(t_k)\|=1$, \eqref{eq:euler} can be written as:
\begin{equation}
\label{eq:euler2}
\begin{split}
    &\bm{x}(t_{k+1})=\bm{x}(t_k)+\bm{u}(t_k)+\bm{n}(t_k),
\end{split}
\end{equation}
with $\bm{n}(t_k)\sim\mathcal{N}(0, \|\bm{u}(t_k)\|W)$. Due to the unit velocity assumption above, the time intervals $\Delta t_k, k=0, 1, 2, \cdots$ are determined once the command sequence $\bm{u}({t_0}), \bm{u}(t_1), \bm{u}(t_2), \cdots$ is formalized. Since the physical times $t_k$ do not play significant roles in our theoretical development in the sequel, it is convenient to rewrite \eqref{eq:euler2} as the main dynamics model of this work:
\begin{equation}
\label{eq:euler3}
\begin{split}
    & \bm{x}_{k+1}=\bm{x}_k+\bm{u}_k+\bm{n}_k, \; \bm{n}_k\sim\mathcal{N}(0, \|\bm{u}_k\|W).
\end{split}
\end{equation}

Let the probability distributions of the robot position at a given time step $k$ be parametrized by a Gaussian model $\bm{x}_k\sim\mathcal{N}(x_k, P_k)$, where $x_k\in\mathbb{R}^d$ is the nominal position and $P_k\in \mathbb{S}_{++}^d$ is the associated covariance matrix (with $d$ being the dimension of the configuration space). In this paper, we consider a path-planning framework in which the sequence  $\{(x_k, P_k) \}_{k\in\mathbb{N}}$ is scheduled. Following \cite{lambert2003safe, pepy2006safe}, the product space $\mathbb{R}^d \times \mathbb{S}_{++}^d$ is called the \emph{uncertain configuration space}. In what follows, the problem of finding the shortest path in the uncertain configuration space with respect to a novel information-theoretic path length function is formulated.

First, an appropriate directed distance function from a point $(x_k, P_k)\in \mathbb{R}^d \times \mathbb{S}_{++}^d$  to another $(x_{k+1},P_{k+1})\in \mathbb{R}^d \times \mathbb{S}_{++}^d$ is introduced. This function is interpreted as the cost of steering the random state variable $\bm{x}_k\sim\mathcal{N}(x_k, P_k)$ to $\bm{x}_{k+1}\sim\mathcal{N}(x_{k+1}, P_{k+1})$ in the next time step under the dynamics provided by \eqref{eq:euler3}. In order to implement the rational inattention concept, we formulate this cost as a weighted sum of the control cost $\mathcal{D}_{\text{cont}}(k)$ and the information cost $\mathcal{D}_{\text{info}}(k)$ in achieving each state transition.

\subsection{Control Cost}
\label{sec:Min_ener}
The control cost is simply the commanded travel distance in the Euclidean metric:
\begin{equation}
\mathcal{D}_{\text{cont}}(k):=\|x_{k+1}-x_k\|.
\end{equation}

\subsection{Information Cost}
\label{sec:Info_theory}

Jointly accounting for both the control efficiency and sensing simplicity in planning necessitates the formulation of a metric that captures the information acquisition cost required for path following. We utilize the information gain (entropy reduction) for this purpose. 

Assume that the control input $\bm{u}_k=x_{k+1}-x_k$ is applied to \eqref{eq:euler3}. 
The propagation of the prior covariance during the movement of the robot, over the time interval $[t_k, t_{k+1})$, is denoted as $\hat{P}_{k}=P_k+\|x_{k+1}-x_k\|W$. At time $t_{k+1}$, the covariance is ``reduced" to $P_{k+1}(\preceq \hat{P}_{k})$ by utilizing a sensor input. The minimum information gain (minimum number of \emph{bits} that must be contained in the sensor data) for this transition is:
\begin{equation}
    \mathcal{D}_{\text{info}}(k) = \frac{1}{2}\log_2\det \hat{P}_{k} - \frac{1}{2}\log_2\det P_{k+1}. \label{eq:info_gain1}
\end{equation} 
The notion of an ``optimal" sensing signal which reduces $\hat{P}_{k}$ to $P_{k+1}$ has been previously discussed in \cite{tanaka2016semidefinite} in the context of optimal sensing in filtering theory. The information cost function $\mathcal{D}_{\text{info}}(k)$ in \eqref{eq:info_gain1} is well-defined for the pairs $(P_k, P_{k+1})$ satisfying $P_{k+1}\preceq \hat{P}_{k}$. For those pairs which do not satisfy $P_{k+1}\preceq \hat{P}_{k}$, we generalize \eqref{eq:info_gain1} as:
\begin{equation}
\begin{split}
    \mathcal{D}_{\text{info}}(k)=&\min_{Q_{k+1}\succeq 0} \quad \frac{1}{2}\log_2\det \hat{P}_{k}-\frac{1}{2}\log_2\det Q_{k+1} \label{eq:d_info_general} \\
    & \quad \text{s.t. } \quad Q_{k+1} \preceq P_{k+1}, \;\; Q_{k+1} \preceq \hat{P}_{k}.
\end{split}
\end{equation}
Notice that \eqref{eq:d_info_general} takes a non-negative value for any given transition from an origin $(x_k, P_k)$ to destination $(x_{k+1}, P_{k+1})$.  However, \eqref{eq:d_info_general} is an implicit function involving a convex optimization problem in its expression (more precisely, the max-det problem \cite{vandenberghe1998determinant}). To see why \eqref{eq:d_info_general} is an appropriate generalization of \eqref{eq:info_gain1}, consider a two-step procedure $\hat{P}_{k}\rightarrow Q_{k+1}\rightarrow P_{k+1}$ to update the prior covariance $\hat{P}_{k}$ to the posterior covariance $P_{k+1}$. In the first step, the uncertainty is ``reduced" from $\hat{P}_{k}$ to satisfy both $Q_{k+1}\preceq \hat{P}_{k}$ and $Q_{k+1}\preceq P_{k+1}$. The associated information gain (the amount of telemetry data) is $\frac{1}{2}\log_2\det \hat{P}_{k}-\frac{1}{2}\log_2\det Q_{k+1}$. In the second step, the covariance $Q_{k+1}$ is ``increased" to $P_{k+1}(\succeq Q_{k+1})$. This step incurs no information cost, since the location uncertainty can be increased simply by ``deteriorating" the prior knowledge. The max-det problem \eqref{eq:d_info_general} can then be interpreted as finding the optimal intermediate step $Q_{k+1}$ which minimizes the information gain in the first step.

\subsection{Total Cost}
\label{sec:Total_cost}
The cost to steer a random state variable $\bm{x}_k\sim\mathcal{N}(x_k, P_k)$ to $\bm{x}_{k+1}\sim\mathcal{N}(x_{k+1}, P_{k+1})$ is a weighted sum of $\mathcal{D}_{\text{cont}}(k)$ and $\mathcal{D}_{\text{info}}(k)$.
Introducing $\alpha>0$, the total RI cost is:
\begin{equation}
\label{eq:def_D}
\begin{split}
    &\mathcal{D}(x_k, x_{k+1}, P_k, P_{k+1}) := \mathcal{D}_{\text{cont}}(k)+\alpha \mathcal{D}_{\text{info}}(k) \\
    & \quad =\min_{Q_{k+1}\succ 0} \quad \|x_{k+1}-x_k\| \\
    & \qquad \qquad \qquad +\frac{\alpha}{2}\left[\log_2\det\hat{P}_k -\log_2\det Q_{k+1} \right] \\
    & \qquad \qquad \text{ s.t. } \;\; Q_{k+1} \preceq P_{k+1}, \; \; Q_{k+1} \preceq \hat{P}_k.
\end{split}
\end{equation}
By increasing $\alpha$, more weight is placed on the amount of information which must be gained compared to the distance traversed. Note that the information cost $\mathcal{D}_{\text{info}}$ is an asymmetric function, so that transitioning $(x_{1},P_{1}) \rightarrow (x_{2},P_{2})$ does not return the same cost as $(x_{2},P_{2}) \rightarrow (x_{1},P_{1})$. 

\section{Problem Formulation}
\label{sec:Formulation}
Having introduced the RI cost function \eqref{eq:def_D}, it is now appropriate to introduce the notion of path length. Let $\gamma: [0,T]\rightarrow \mathbb{R}^d\times \mathbb{S}_{++}^d$, $\gamma(t)=(x(t), P(t))$ be a path. The RI length of a path $\gamma$ is defined as:
\begin{equation}
    c(\gamma):=\sup_\mathcal{P} \sum_{k=0}^{N-1} \mathcal{D}\left(x(t_k), x(t_{k+1}), P(t_k), P(t_{k+1})\right), \nonumber
\end{equation}
where the supremum is over the space of partitions $\mathcal{P}$ of $[0,T]$. If $\gamma(t)$ is differentiable and $W\succeq \frac{d}{dt}P(t)\; \forall \; t\in [0,T]$, then it can be shown that:
\begin{equation}
    c(\gamma) = \int_{0}^{T}\left[ \left\|\frac{d}{dt}x(t) \right\| + \frac{\alpha}{2}\text{Tr}\left( W - \frac{d}{dt}P(t) \right)P^{-1}(t) \right]dt \nonumber
\end{equation}

\subsection{Topology on the path space}
In this subsection, we introduce a topology for the space of paths $\gamma: [0,T]\rightarrow \mathbb{R}^d\times \mathbb{S}_{++}^d$, which is necessary to discuss continuity of $c(\gamma)$. The space of all paths $\gamma: [0,T]\rightarrow \mathbb{R}^d\times \mathbb{S}_{++}^d$ can be thought of as a subset (convex cone) of the space of \emph{generalized paths} $\gamma: [0,T]\rightarrow \mathbb{R}^d\times \mathbb{S}^d$. The space of generalized paths is a vector space on which addition and scalar multiplication exist and are defined as  $(\gamma_1+\gamma_2)(t)=(x_1(t)+x_2(t), P_1(t)+P_2(t))$ and $\alpha \gamma(t)=(\alpha x(t), \alpha P(t))$ for $\alpha\in\mathbb{R}$, respectively. Assuming that a path can be partitioned such that $\mathcal{P}=(0=t_0<t_1<\cdots < t_N=T)$, the variation $V(\gamma; \mathcal{P})$ of a generalized path $\gamma$ with respect to the choice of $\mathcal{P}$ is given by:
\begin{equation} \label{eq:variation}
    V(\gamma; \mathcal{P}):=\|x(0)\|+\bar{\sigma}(P(0))+\sum_{k=0}^{N-1}\Bigl[\|\Delta x_k\|+\bar{\sigma}(\Delta P_k) \Bigr] \nonumber
\end{equation}
where $\Delta x_k = x(t_{k+1})-x(t_k)$, and $\Delta P_k=P(t_{k+1})-P(t_k)$. Utilizing the above definition for the variation of a path, the total variation of a generalized path $\gamma$ corresponds to the partition $\mathcal{P}$ which results in the supremum of the variation:
\begin{equation}
    |\gamma|_{\text{TV}}:=\sup_{\mathcal{P}}V(\gamma; \mathcal{P}). \nonumber
\end{equation}
Notice that $|\cdot|_{\text{TV}}$ defines a norm on the space of generalized paths. 
The following relationship holds between $|\gamma|_{\text{TV}}$ and
\begin{equation}
\|\gamma\|_\infty:=\sup_{t\in[0,T]} \|x(t)\|+\bar{\sigma}(P(t)). \nonumber
\end{equation} 
\begin{lemma} \cite[Lemma 13.2]{carothers2000real} 
\label{lem:ineq}
For a given path $\gamma$ with partitioning $\mathcal{P}$ the following inequality holds:
\begin{equation}
    \|\gamma\|_\infty \leq |\gamma|_{\text{TV}}. \nonumber
\end{equation}
\end{lemma}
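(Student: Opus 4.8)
The plan is to deduce the inequality from the elementary structure of $V(\gamma;\mathcal{P})$: every summand occurring in it is nonnegative, so any single difference $\gamma(t)-\gamma(0)$ that can be realized as part of a partition sum is already dominated by $|\gamma|_{\text{TV}}$, and a triangle inequality then upgrades this into a bound on $\gamma(t)$ itself. Concretely, it is convenient to equip $\mathbb{R}^d\times\mathbb{S}^d$ with the norm $\|(x,P)\|_\star:=\|x\|+\bar{\sigma}(P)$; this is indeed a norm, because $\bar{\sigma}(\cdot)$ --- the operator $2$-norm restricted to symmetric matrices --- is itself a norm on $\mathbb{S}^d$, in particular satisfying $\bar{\sigma}(A+B)\le\bar{\sigma}(A)+\bar{\sigma}(B)$. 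In this shorthand $V(\gamma;\mathcal{P})=\|\gamma(0)\|_\star+\sum_{k=0}^{N-1}\|\gamma(t_{k+1})-\gamma(t_k)\|_\star$ and $\|\gamma\|_\infty=\sup_{t\in[0,T]}\|\gamma(t)\|_\star$, so the claim reads $\sup_{t}\|\gamma(t)\|_\star\le|\gamma|_{\text{TV}}$.

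First I would dispose of the trivial case $|\gamma|_{\text{TV}}=\infty$, and otherwise fix an arbitrary $t\in[0,T]$ and choose the partition $\mathcal{P}_t$ consisting of the points $0$, $t$, $T$ (deleting repetitions when $t\in\{0,T\}$ so that $\mathcal{P}_t$ is a genuine strictly increasing partition of $[0,T]$). Since all the increment terms in $V(\gamma;\mathcal{P}_t)$ are nonnegative, keeping only $\|\gamma(0)\|_\star$ together with the increment over $[0,t]$ gives
\begin{equation}
|\gamma|_{\text{TV}}\ \ge\ V(\gamma;\mathcal{P}_t)\ \ge\ \|\gamma(0)\|_\star+\|\gamma(t)-\gamma(0)\|_\star\ \ge\ \|\gamma(t)\|_\star, \nonumber
\end{equation}
the last step being the triangle inequality for $\|\cdot\|_\star$ (equivalently, the triangle inequalities for $\|\cdot\|$ on $\mathbb{R}^d$ and for $\bar{\sigma}(\cdot)$ on $\mathbb{S}^d$, applied to the two coordinates). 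As $t$ was arbitrary, taking the supremum over $t\in[0,T]$ on the left-hand side yields $\|\gamma\|_\infty\le|\gamma|_{\text{TV}}$.

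I do not anticipate a genuine obstacle: this is essentially the textbook fact that a function of bounded variation is bounded, transplanted to the $\mathbb{R}^d\times\mathbb{S}^d$-valued setting. The only two points that deserve an explicit line are (i) that $\bar{\sigma}$ really does obey the triangle inequality on $\mathbb{S}^d$ --- which is precisely what lets the ``insert the intermediate node $t$'' trick work for the covariance coordinate exactly as it does for the position coordinate --- and (ii) the bookkeeping at the endpoints $t=0,T$, where $\mathcal{P}_t$ collapses to a two-point partition; there the displayed chain still holds, now simply by discarding additional nonnegative increment terms.
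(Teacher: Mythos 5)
Your proof is correct and follows essentially the same route as the paper's: insert $t$ as a partition point to form $\mathcal{P}=(0,t,T)$, apply the triangle inequality to bound $\|x(t)\|+\bar{\sigma}(P(t))$ by the initial term plus the first increment, and discard the remaining nonnegative increments to reach $V(\gamma;\mathcal{P})\leq|\gamma|_{\text{TV}}$. Your additional remarks on the norm structure of $\|\cdot\|_\star$ and the endpoint bookkeeping are fine but do not change the argument.
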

\begin{proof}
    See Appendix~\ref{sec:AppenI} for proof.
\end{proof}

In what follows, we assume on the space of generalized paths $\gamma: [0,T]\rightarrow \mathbb{R}^d\times \mathbb{S}^d$ the topology of total variation metric $|\gamma_1-\gamma_2|_{\text{TV}}$, which is then inherited to the space of paths $\gamma: [0,T]\rightarrow \mathbb{R}^d\times \mathbb{S}_{++}^d$. We denote by $\mathcal{BV}[0, T]$ the space of paths $\gamma: [0,T]\rightarrow \mathbb{R}^d\times \mathbb{S}_{++}^d$ such that $|\gamma|_{\text{TV}}<\infty$.
In the next subsection, we discuss the continuity of the RI path cost $c(\cdot)$ in the space $\mathcal{BV}[0, T]$.

\subsection{Continuity of RI Cost Function} \label{sec:continuity}
The continuity of RI path cost function plays a critical role in determining the theoretical guarantees we can provide when we use sampling-based algorithms to find the shortest RI path. Specifically, the asymptotic optimality (the convergence to the path with the minimum cost as the number of nodes is increased) of RRT* algorithms \cite{Karaman2010}, the main numerical method we use in this paper, expects the continuity of the path cost function. Showing that the RI cost function (\ref{eq:def_D}) is continuous requires additional derivation which is shown via Theorem \ref{theo:continuity}.
\begin{theorem}
\label{theo:continuity}
When $d=1$, the path cost function $c(\cdot)$ is continuous in the sense that for every $\gamma\in\mathcal{BV}[0,T]$, $\gamma : [0,T]\rightarrow \mathbb{R}^1 \times \mathbb{S}_{2\epsilon}^1$, and for every $\epsilon_0>0$, there exists $\delta >0$ such that
\begin{equation}
    |\gamma'-\gamma|_{\text{TV}}<\delta \quad \Rightarrow \quad  |c(\gamma')-c(\gamma)|<\epsilon_0. \nonumber
\end{equation}
\end{theorem}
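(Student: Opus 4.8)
The plan is to prove continuity of $c(\cdot)$ at an arbitrary $\gamma\in\mathcal{BV}[0,T]$ by establishing \emph{lower} and \emph{upper} semicontinuity separately, each with an explicit $\delta$. The first step is to make $\mathcal{D}$ explicit for $d=1$: since $\mathbb{S}^1_{2\epsilon}=[2\epsilon,\infty)$, the max-det program in \eqref{eq:def_D} is solved in closed form by $Q_{k+1}=\min\{P_{k+1},\hat{P}_k\}$ with $\hat{P}_k=P_k+|x_{k+1}-x_k|W$, yielding
\[
\mathcal{D}(x_k,x_{k+1},P_k,P_{k+1})=|x_{k+1}-x_k|+\tfrac{\alpha}{2}\bigl(\log_2(P_k+|x_{k+1}-x_k|W)-\log_2 P_{k+1}\bigr)^{+}.
\]
Then I would record two facts. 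By Lemma~\ref{lem:ineq}, $|\gamma'-\gamma|_{\text{TV}}<\delta$ forces $\|\gamma'-\gamma\|_\infty<\delta$, so for $\delta<\epsilon$ the covariance component of $\gamma'$ stays $\ge\epsilon>0$ — this is exactly why the hypothesis assumes $P\ge 2\epsilon$ — and both components of $\gamma,\gamma'$ remain in a fixed bounded region on which $\mathcal{D}$ is Lipschitz in its four scalar arguments with a constant $L$ depending only on $\epsilon,W,\alpha,\|\gamma\|_\infty$; in particular $\gamma'\in\mathcal{BV}[0,T]$. Moreover, using $\log_2(P_k+|\Delta x_k|W)\le\log_2 P_k+\tfrac{|\Delta x_k|W}{2\epsilon\ln 2}$ together with the finiteness of the total variations of the $x$- and $P$-components, one gets $c(\gamma)<\infty$ (and likewise $c(\gamma')<\infty$), so the target inequality is meaningful. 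Below, $V(f)$ denotes the total variation of a scalar function $f$ on $[0,T]$; recall $V(x'-x),V(P'-P)\le|\gamma'-\gamma|_{\text{TV}}$ and $V(x),V(P)\le|\gamma|_{\text{TV}}$.

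For lower semicontinuity, given $\epsilon_0>0$ I would fix a finite partition $\mathcal{P}^{\star}=(0=t_0<\cdots<t_N=T)$ with $\sum_{\mathcal{P}^{\star}}\mathcal{D}(\gamma)>c(\gamma)-\epsilon_0/2$. Because each of the $N$ transition costs is Lipschitz and $\|\gamma'-\gamma\|_\infty<\delta$, the finite sum $\sum_{\mathcal{P}^{\star}}\mathcal{D}(\gamma')$ differs from $\sum_{\mathcal{P}^{\star}}\mathcal{D}(\gamma)$ by at most $4LN\delta$, so for $\delta$ small enough, $c(\gamma')\ge\sum_{\mathcal{P}^{\star}}\mathcal{D}(\gamma')>c(\gamma)-\epsilon_0$.

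The hard part will be upper semicontinuity, because the partitions nearly attaining $c(\gamma')$ may be arbitrarily fine, so the termwise estimate above (whose error scales with the node count) is useless — I need a bound on $\sum_{\mathcal{P}}\mathcal{D}(\gamma')-\sum_{\mathcal{P}}\mathcal{D}(\gamma)$ that is \emph{uniform over all partitions} $\mathcal{P}$. The idea is to split the per-interval difference into pieces each of which sums to $O(\delta)$ no matter how many intervals there are. For the control part this is immediate, since $|\Delta x'_k|-|\Delta x_k|\le|\Delta(x'-x)_k|$ and $\sum_k|\Delta(x'-x)_k|\le|\gamma'-\gamma|_{\text{TV}}$. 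For the information part, writing $\rho_k:=\log_2(P_k+|\Delta x_k|W)-\log_2 P_{k+1}$, its primed analogue $\rho'_k$, and $C(t):=\log_2 P'(t)-\log_2 P(t)$, a short computation gives the identity
\[
\rho'_k-\rho_k=A_k+B_k+\bigl(C(t_k)-C(t_{k+1})\bigr),\quad A_k:=\log_2\tfrac{P'_k+|\Delta x'_k|W}{P'_k+|\Delta x_k|W},\ \ B_k:=\log_2\tfrac{1+|\Delta x_k|W/P'_k}{1+|\Delta x_k|W/P_k}.
\]
Since $P_k,P'_k\ge\epsilon$, $|A_k|$ is $O(|\Delta(x'-x)_k|)$ and $|B_k|$ is $O(\|\gamma'-\gamma\|_\infty\,|\Delta x_k|)$, so $\sum_k|A_k|$ and $\sum_k|B_k|$ are each $O(\delta)$ (using $\sum_k|\Delta x_k|\le|\gamma|_{\text{TV}}$). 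The third term telescopes, and $\sum_k|C(t_k)-C(t_{k+1})|\le V(C)$, which is \emph{itself} $O(\delta)$: writing $C=\tfrac{1}{\ln 2}\log\!\bigl(1+(P'-P)/P\bigr)$ and applying the product rule for total variation of bounded-variation functions, $V(C)=O\bigl(\|\gamma'-\gamma\|_\infty\,V(1/P)+\|1/P\|_\infty\,V(P'-P)\bigr)=O(\delta)$. The crucial point is that the telescoping sum of the increments of the \emph{small-variation} function $C$ is controlled even though the number of those increments is not. Combining these through $(\rho_k+A_k+B_k+g_k)^{+}-(\rho_k)^{+}\le|A_k|+|B_k|+|g_k|$ with $g_k:=C(t_k)-C(t_{k+1})$, summing over $\mathcal{P}$, and taking the supremum yields $c(\gamma')\le c(\gamma)+K\delta$ for a constant $K=K(\epsilon,W,\alpha,|\gamma|_{\text{TV}})$. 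Choosing $\delta<\min\{\epsilon/2,\,\epsilon_0/(4LN),\,\epsilon_0/K\}$ gives both bounds and proves the theorem. The only essential use of $d=1$ is the closed form for $\mathcal{D}_{\mathrm{info}}$ and the attendant scalar-logarithm estimates; the remaining total-variation bookkeeping is dimension-agnostic and should extend once the matrix analogue of $\mathcal{D}$ is analyzed similarly.
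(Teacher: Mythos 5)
Your proposal is correct, and at the top level it follows the same strategy as the paper: both arguments hinge on a per-transition estimate for $|\mathcal{D}(z_k',z_{k+1}')-\mathcal{D}(z_k,z_{k+1})|$ whose sum over an \emph{arbitrary} partition is controlled by $|\gamma'-\gamma|_{\text{TV}}+\delta\,|\gamma|_{\text{TV}}$, after which one passes to the supremum over partitions (the paper does this via common refinements of near-optimal partitions, you do it via a one-sided sup manipulation plus a separate fixed-partition lower bound --- both are fine, and your uniform bound actually yields both directions at once). Where you genuinely diverge is in how that key estimate is derived. The paper's Lemma~\ref{lem:lipschitz} splits into four cases according to the signs of $\mathcal{D}_{\text{info}}$ and $\mathcal{D}_{\text{info}}'$ and grinds through logarithm inequalities in each case; you instead write the exact identity $\rho_k'-\rho_k=A_k+B_k+\bigl(C(t_k)-C(t_{k+1})\bigr)$ and then dispose of the positive part in one line via the $1$-Lipschitz property of $t\mapsto\max\{0,t\}$, which eliminates the case analysis entirely. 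Your treatment of the $\log P'-\log P$ contribution through the total variation of $C$ and the product rule $V(fg)\le\|f\|_\infty V(g)+\|g\|_\infty V(f)$ is also a different organizational device from the paper's termwise bound $|\Delta(P'-P)_k|+\delta|\Delta P_k|$, though it lands on the same $O(\delta)$ budget. The identity and all three bounds ($\sum_k|A_k|=O(|\gamma'-\gamma|_{\text{TV}})$, $\sum_k|B_k|=O(\delta\,|\gamma|_{\text{TV}})$, $V(C)=O(\delta)$) check out, as do the preliminary observations ($P'\ge\epsilon$ via Lemma~\ref{lem:ineq}, finiteness of $c(\gamma)$ on $\mathcal{BV}$, and the closed form $Q_{k+1}=\min\{P_{k+1},\hat P_k\}$ matching the paper's scalar expression for $\mathcal{D}_{\text{info}}$). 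What your version buys is a shorter, less error-prone derivation of the Lipschitz-type lemma; what the paper's version buys is a bound stated directly in the increments $\Delta(x'-x)_k$, $\Delta(P'-P)_k$, which plugs verbatim into the definition of $V(\gamma'-\gamma;\mathcal{P})$ without invoking the BV product rule. Your closing remark is also accurate: the genuine obstruction to $d\ge 2$ is the loss of the scalar closed form for the max-det problem, not the total-variation bookkeeping.
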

\begin{proof}
    See Appendix~\ref{sec:AppenII} for proof.
\end{proof}

Before discussing the required modifications for implementing RRT* algorithm with RI cost in Section~\ref{sec:RRT_main}, we first characterize the shortest RI path in the obstacle-free space, and then formally define the shortest RI path problem in obstacle-filled spaces in the following subsections.

\subsection{Shortest Path in Obstacle-Free Space} \label{sec:Prelim-D}
In obstacle-free space, it can be shown that the optimal path cost between $z_1=(x_1,P_1)$ and $z_2=(x_2,P_2)$ is equal to $\mathcal{D}(z_1, z_2)$. In other words, the triangular inequality $\mathcal{D}(z_1, z_2) \leq \mathcal{D}(z_1, z_{int})+\mathcal{D}(z_{int}, z_2)$ holds. This means it is optimal for the robot to follow the direct path from $x_1$ to $x_2$ without sensing, and then make a measurement at $x_2$ to shrink the uncertainty from $\hat{P}_1$ to $P_2$. In what follows, we call such a motion plan the ``move-and-sense" strategy. 
The optimality of the move-and-sense path for one-dimensional geometric space is shown in Appendix~\ref{sec:AppenIII}. We confirm this optimality by simulation in Section~\ref{sec:1D}, where the move-and-sense path is the wedge-shaped path depicted in Fig.~\ref{fig:1D_tree}~(a).


\subsection{Shortest Path Formulation}
The utility of path-planning algorithms is made non-trivial by the introduction of obstacles in the path space. Let $X_{obs}\subset \mathbb{R}^d$ be a closed subset of spatial points representing obstacles. The initial configuration of the robot is defined as $z_{\text{init}}=(x_0, P_0)\in \mathbb{R}^d\times \mathbb{S}_{++}^d$, while $\mathcal{Z}_{\text{target}} \subset \mathbb{R}^d\times \mathbb{S}_{++}^d$ is a given closed subset representing the target region which the robot desires to attain. Given a confidence level parameter $\chi^2>0$, the shortest RI path problem can be formulated as:
\begin{equation}
\label{eq:main_problem}
\begin{split}
    \min_{\gamma \in \mathcal{BV}[0, T]} \;\; & c(\gamma) \\
    \text{ s.t. }\;\;\;\; & \gamma(0)=z_{\text{init}}, \; \; \gamma(T)\in \mathcal{Z}_{\text{target}} \\
    & (x(t)-x_{\text{obs}})^\top P^{-1}(t)(x(t)-x_{\text{obs}}) \geq \chi^2,\\
    &\qquad \forall t\in [0, T], \;\; \forall x_{\text{obs}}\in X_{obs}.
\end{split}
\end{equation}
The $\chi^2$ term in the constraints of (\ref{eq:main_problem}) is implemented to provide a confidence bound on probability that a robot with position $x(t)$ will not be in contact with an obstacle $x_{\text{obs}}$.

\section{RI-RRT* Algorithm}
\label{sec:RRT_main}

\subsection{RRT*}
\label{sec:RRT}
The RRT algorithm \cite{Lavalle2001} constructs a tree of nodes (state realizations) through random sampling of the feasible state-space and then connects these nodes with edges (tree branches). A user-defined cost is utilized to quantify the length of the edges, which are in turn summed to form path lengths. Each new node is connected via a permanent edge to the existing node which provides the shortest path between the new node and the initial node of the tree. 
Although the RRT algorithm is known to be probabilistically complete (the algorithm finds a feasible path if one exists), it does not achieve asymptotic optimality (path cost does not converge to the optimal one as the number of nodes is increased)
 \cite{Karaman2010}. The RRT* algorithm \cite{Karaman2010} attains asymptotic optimality by including an additional ``re-wiring" step that re-evaluates if the path length for each node can be reduced via a connection to the newly created node. This paper utilizes RRT* as a numerical approach to the shortest path problem (\ref{eq:main_problem}).

\subsection{Algorithm}
\label{sec:IGPP_alg}
Provided below is a Rationally Inattentive RRT* (RI-RRT*) algorithm for finding a solution to (\ref{eq:main_problem}). Like the original RRT* algorithm, the RI-RRT* algorithm constructs a graph of state nodes and edges ($G\leftarrow (Z,E)$) in spaces with or without obstacles.
\begin{algorithm}[h]\label{alg:igpp_body1}
    \footnotesize{
    $(z_1) \leftarrow (z_{\text{init}})$; $E \leftarrow \emptyset$; $G'\leftarrow (z_1,E)$\;
    \For{$i = 2:N$}{
        $G \leftarrow G'$ \;
        $z_i = (x_i,P_i) \leftarrow \textsc{\fontfamily{cmss}\selectfont Generate}(i)$\;
        $(Z',E') \leftarrow (Z,E)$\;
        $z_{\text{near}} \leftarrow \textsc{\fontfamily{cmss}\selectfont Nearest}(Z',z_i)$\;
        $z_{\text{new}} \leftarrow \textsc{\fontfamily{cmss}\selectfont Scale} (z_{\text{near}},z_i,ED_{\text{min}})$\;
        \If{$\textsc{\fontfamily{cmss}\selectfont ObsCheck} (z_{\text{near}},z_{\text{new}}) = False$}{
            $Z' \leftarrow Z' \cup z_{\text{new}}$\;
            $Z_{\text{nbors}} \leftarrow \textsc{\fontfamily{cmss}\selectfont Neighbor} (Z,z_{\text{new}},ED_{\text{nbors}})$\;
            $Path_{z_{\text{new}}} \leftarrow realmax$\;
            \For{$z_j \in Z_{\text{nbors}}$}{
                \If{$\textsc{\fontfamily{cmss}\selectfont ObsCheck} (z_j,z_{\text{new}}) = False$}{
                    $Path_{z_{\text{new},j}} \leftarrow Path_{z_j} + \mathcal{D}(z_j,z_{\text{new}})$\;
                    \If{$Path_{z_{\text{new},j}} < Path_{z_{\text{new}}}$}{
                        $Path_{z_{\text{new}}} \leftarrow Path_{z_{\text{new},j}} $\;
                        $z_{\text{nbor}}^* \leftarrow z_j$\;
                    }
                }
            }
            $E' \leftarrow \left[z_{\text{nbor}}^*,z_{\text{new}} \right] \cup E'$\;
            \For{$z_j \in Z_{\text{nbors}} \: \backslash \: z_{\text{nbor}}^*$}{
                \If{$\textsc{\fontfamily{cmss}\selectfont ObsCheck} (z_{\text{new}},z_j) = False$}{
                    $Path_{z_j,\text{rewire}}=Path_{z_{\text{new}}} + \mathcal{D}(z_{\text{new}},z_j)$\;
                    \If{$Path_{z_j,\text{rewire}} < Path_{z_j}$}{
                        $E' \leftarrow E' \cup \left[ z_{\text{new}},z_j \right] \backslash \left[ z_{j,\text{parent}}, z_j \right]$\;
                        $z_{j,\text{parent}} \leftarrow z_{\text{new}}$\;
                        $\textsc{\fontfamily{cmss}\selectfont UpdateDes}(G, z_j)$\;
                    }            
                }
            }
        }$G' \leftarrow (Z',E')$
    }
    }
\caption{RI-RRT* Algorithm}
\end{algorithm}

In Algorithm \ref{alg:igpp_body1}, the $\textsc{\fontfamily{cmss}\selectfont Generate}(i)$ function creates a new point by randomly sampling a spatial location ($x \in \mathbb{R}^d$) and covariance ($P\in \mathbb{S}_{++}^d$). Notice that for a $d$-dimensional configuration space, the corresponding uncertain configuration space $\mathbb{R}^d \times \mathbb{S}^d_{++}$ has $d+\frac{1}{2}d(d+1)$ dimensions from which the samples are generated. The $\textsc{\fontfamily{cmss}\selectfont Nearest}(Z ,z_i)$ function finds the nearest point ($z_\text{near}$), in metric $\hat{\mathcal{D}}(z,z'):=\|x-x'\|+\|P-P'\|_{F}$, between the newly generated state $z_i=(x_i,P_i)$ and an existing state in the set $Z$.
Using the metric $\hat{\mathcal{D}}(z,z')$, the $\textsc{\fontfamily{cmss}\selectfont Scale}(z_{\text{near}},z_i,ED_{\text{min}})$ function linearly shifts the generated point ($z_i$) to a new location as:
\begin{equation*} 
z_{\text{new}} \!=\!\!
\begin{cases}
    z_{\text{near}} \!+\! \frac{ED_{\text{min}}}{\hat{\mathcal{D}}(z_i,z_\text{near})}\left(z_i-z_{\text{near}}\right)~ \text{if}~\hat{\mathcal{D}}(z,z') > ED_\text{min}, \\
    z_i \hspace{3.8cm} \text{otherwise,}
\end{cases}
\end{equation*}
where $ED_\text{min}$ is a user-defined constant.
In addition to generating $z_{\text{new}}$, the $\textsc{\fontfamily{cmss}\selectfont Scale}$ function also ensures that its $\chi^2$ covariance region does not interfere with any obstacles.

\begin{figure}[t]
\centering
\includegraphics[width = 0.6\columnwidth]{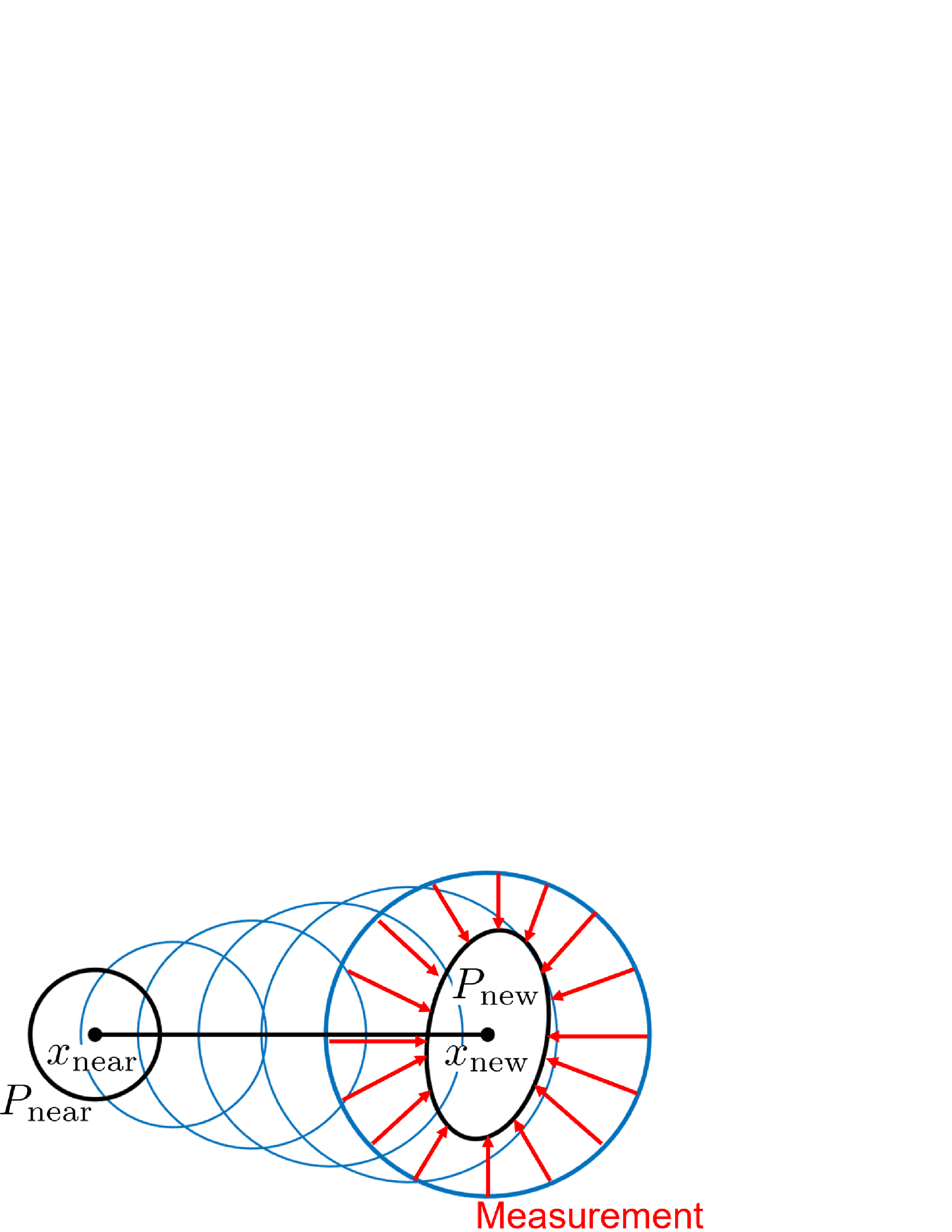}
\caption{Transition from state $z_\text{near}$ to $z_\text{new}$. The blue ellipses represent the propagation of covariance where the $\textsc{\fontfamily{cmss}\selectfont ObsCheck}(z_{\text{near}},z_{\text{new}})$ function in Algorithm \ref{alg:igpp_body1} checks collisions between these propagated covariances, including $z_\text{near}$ and $z_\text{new}$, and obstacles.
The measurement at $z_\text{new} = (x_\text{new}, P_\text{new})$ makes a covariance smaller.} 
\label{fig:prop_cov}
\end{figure}

The $\textsc{\fontfamily{cmss}\selectfont ObsCheck}(z_{\text{near}},z_{\text{new}})$ function ensures that transition from state $z_{\text{near}}$ to $z_{\text{new}}$ does not intersect with the obstacles. More precisely, we assume the transition $z_\text{near} \rightarrow z_\text{new}$ follows the move-and-sense path, introduced in Section~\ref{sec:Prelim-D}, and the $\textsc{\fontfamily{cmss}\selectfont ObsCheck}$ function returns $False$ if all state pairs along the move-and-sense path, shown by blue ellipses in Fig. \ref{fig:prop_cov}, has $\chi^2$ covariance regions that are non-interfering with obstacles.


The $\textsc{\fontfamily{cmss}\selectfont Neighbor}(Z, z_{\text{new}}, ED_{\text{nbors}})$ function returns the sub-set of nodes described as $ Z_{\text{nbors}} =  \{ z_i=(x_i,P_i) \in Z: \hat{\mathcal{D}}(z_i,z_\text{new}) \leq ED_{\text{nbors}} \}$. This set is then evaluated for the presence of obstacles via the $\textsc{\fontfamily{cmss}\selectfont ObsCheck}$ function of the previous paragraph. Note that in this instance, the function is evaluating obstacle interference along the continuous path of state-covariance pairs from $z_j \rightarrow z_{\text{new}}$. Lines 14-17 of Algorithm \ref{alg:igpp_body1} connect the new node to the existing graph in an identical manner to RRT*, where the $Path_{z}$ denote the cost of the path from the $z_{init}$ to node $z$ through the edges of $G$.
Line 18 creates a new edge between the new node and the existing nodes from the neighbor group $Z_{\text{nbors}}$ which results in the minimum $Path_{z_{\text{new}}}$.
The calculation of RI path cost in Line 14 utilizes (\ref{eq:def_D}).

Lines 19-24 are the tree re-wiring steps of Algorithm \ref{alg:igpp_body1}. In line 20, the $\textsc{\fontfamily{cmss}\selectfont ObsCheck}$ function is called again. This is because the move-and-sense path is direction-dependent, and thus $\textsc{\fontfamily{cmss}\selectfont ObsCheck}(z_j, z_{\text{new}}) = False$ does not necessarily imply  $\textsc{\fontfamily{cmss}\selectfont ObsCheck}(z_{\text{new}}, z_j) = False$. Finally, for each rewired node $z_j$, its cost (i.e., $Path_{z_j}$) and the cost of its descendants are updated via $\textsc{\fontfamily{cmss}\selectfont UpdateDes}(G, z_j)$ function in line 25.


To increase the  computational efficiency of RI-RRT* algorithm we deploy a branch-and-bound technique as detailed in \cite{karaman2011anytime}. For a  given tree $G$, let $z_\text{min} $ be the node that has the lowest cost along the nodes of $G$ within $\mathcal{Z}_{\text{target}}$. As discussed in Section~\ref{sec:Prelim-D}, $\mathcal{D}(z,z_\text{goal})$ is a lower-bound for the cost of transitioning from $z$ to $z_\text{goal}$. The branch-and-bound algorithm periodically deletes the nodes $Z'' = \{z\in Z: Path_z +\mathcal{D}(z,z_\text{goal}) \geq Path_{z_\text{min}}\}$. This elimination of the non-optimal nodes speeds up the RI-RRT* algorithm.

\subsection{Properties of RI-RRT*}
\label{sec:Alg_properties}
The question regarding the asymptotic optimality of RI-RRT* naturally arises. Recall that the proof of the asymptotic optimality of the RRT* algorithm \cite{Karaman2011} is founded on four main assumptions:
\begin{enumerate}
    \item additivity of the cost function, 
    \item the cost function is monotonic,
    \item there exists a finite distance between all points on the optimal path and the obstacle space,
    \item the cost function is Lipschitz continuous, either in the topology of total variation metric \cite{Karaman2011} or the supremum norm metric \cite{Karaman2010}.
\end{enumerate}
The proofs of the first three assumptions are trivial for the RI cost (\ref{eq:def_D}). However, Theorem \ref{theo:continuity} does not suffice to guarantee that the RI cost meets the fourth condition for $d\geq 2$. For this reason, currently the asymptotic optimality of the RI-RRT* algorithm cannot be guaranteed, while the numerical simulations of Section \ref{sec:Results} do show that the proposed algorithm does have merit in rationally inattentive path-planning.
   
\section{Simulation Results}\label{sec:Results}

\subsection{One-Dimensional Simulation}\label{sec:1D}

\begin{figure}[t]
\centering
\centering
    \subfloat[A generated path]
    {\includegraphics[trim = 0cm 0cm 0.2cm 0.5cm, clip=true, width=0.4\columnwidth]{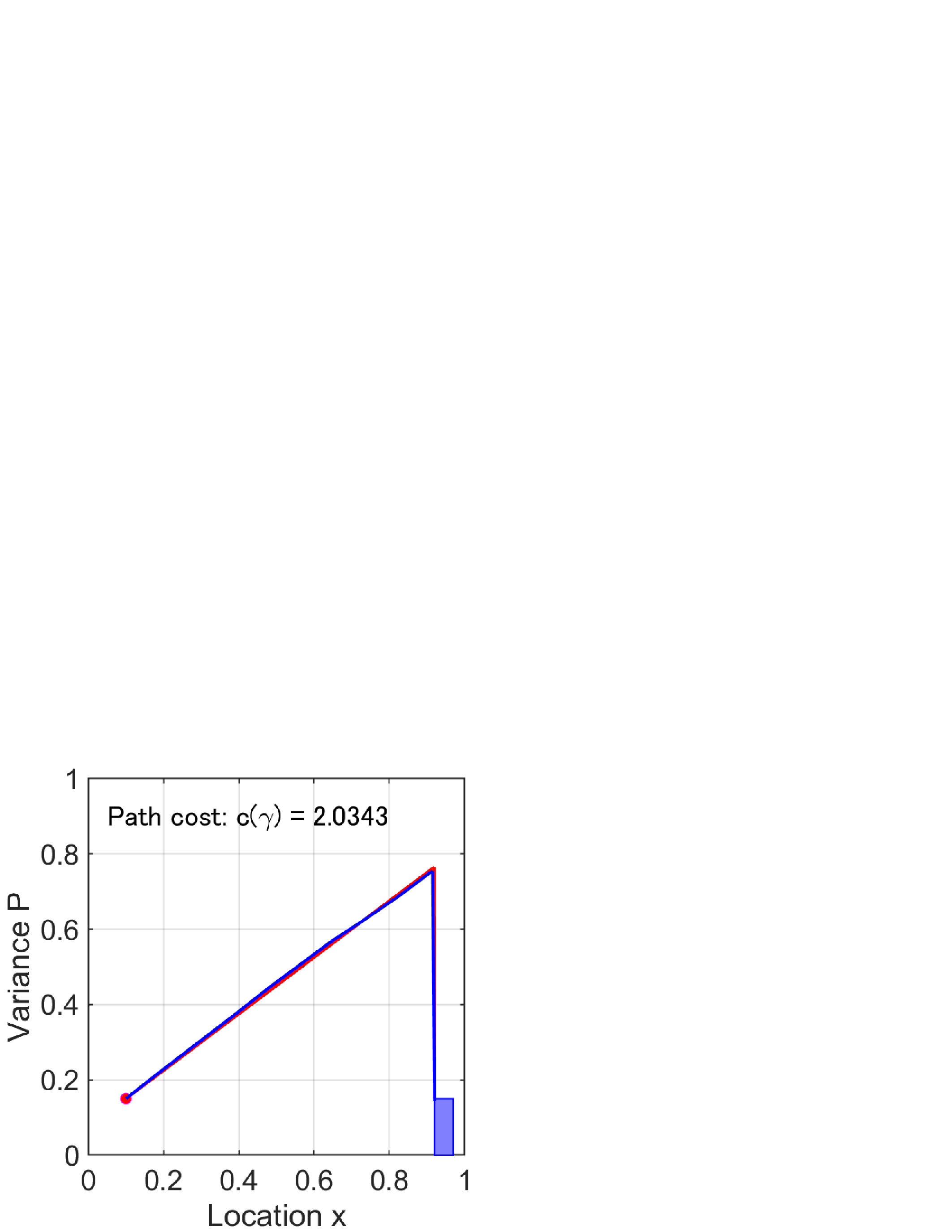}
    \label{fig:exp_snap_init}} \quad
    \subfloat[Path cost for $100$ runs]
    {\includegraphics[trim = 0cm 0cm 0.2cm 0.5cm, clip=true, width=0.45\columnwidth]{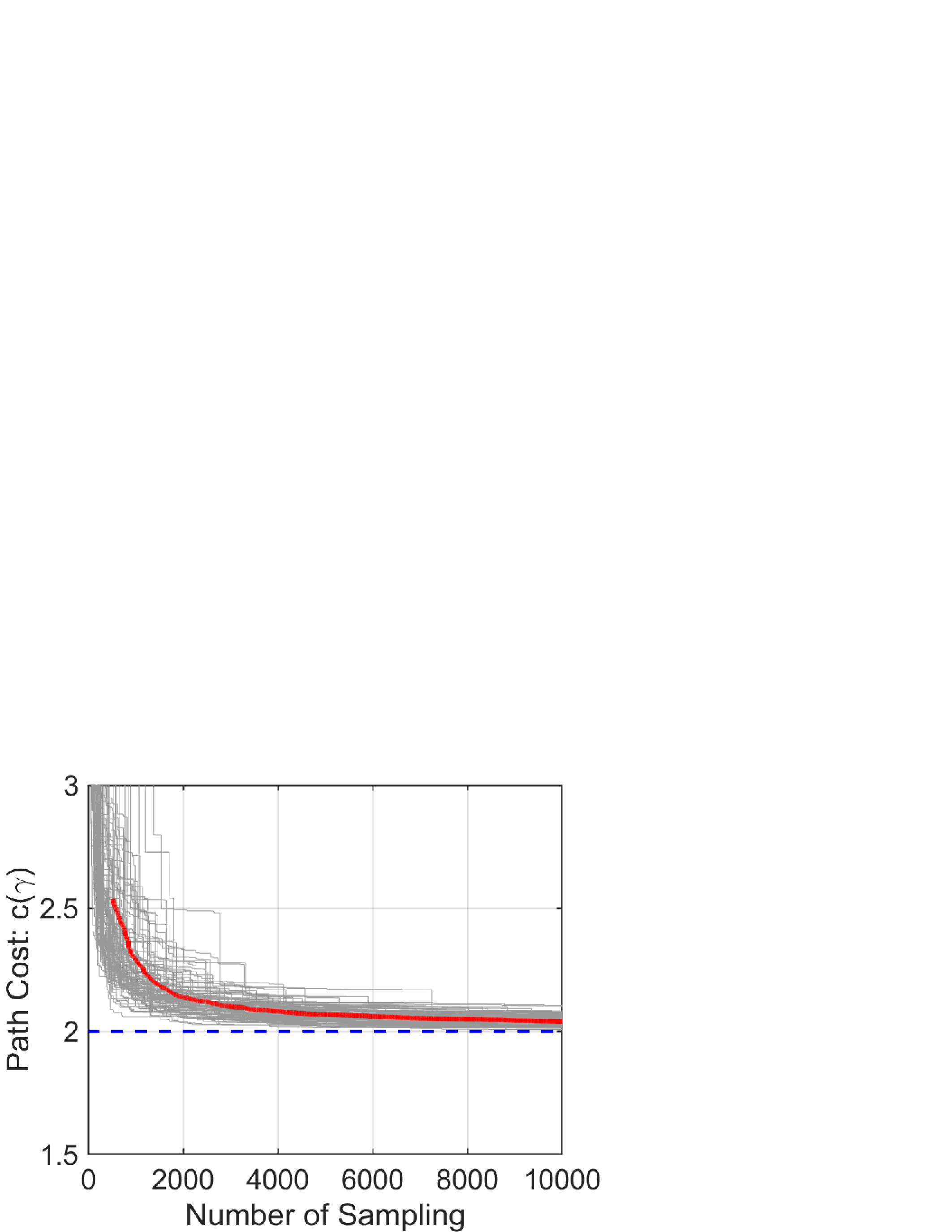}
    \label{fig:exp_snap_fin}}
\caption{Results of the RI-RRT* algorithm with $\alpha = 1$ and $W=0.75$ applied to one-dimensional joint movement-perception problem. (a): The blue line illustrates the path generated with $10,000$ nodes, which almost converges to the known optimal path depicted as the red curve. (b): The total path cost for $100$ runs of $10,000$ nodes. Gray lines plot the path cost for each run, while the average of them is shown in the red line. The path costs of all $100$ runs approach the optimal path cost (dashed blue line).}
\label{fig:1D_tree}
\end{figure}
The first study is the case of a robot which is allowed to travel at a constant velocity in a one-dimensional geometric space from a predetermined initial position and covariance $z_0=(x_0,P_0)$, specified by the red dot in Fig.~\ref{fig:1D_tree}~(a). The robot has a goal of reaching some final state within the blue box representing a goal region which is a sub-set of the reachable space.  
Note that the goal region contains acceptable bounds on both location and uncertainty. 
Although, in one-dimensional setting, the strategy which minimizes the control cost is obviously the one that moves directly toward the target region, we utilize the RI-RRT* algorithm
to solve the non-trivial measurement scheduling problem.

In Fig.~\ref{fig:1D_tree}~(a), the blue curve represents the path generated by the RI-RRT* algorithm with $10,000$ nodes, which is sufficiently close to the shortest path obtainable via the RI-distance depicted as the red curve. 
These wedge-shaped optimal paths are created by the ``move-and-sense" strategy integrated in the RI-cost, where it has a section of covariance propagation followed by an instantaneous reduction of covariance as discussed in Section \ref{sec:Prelim-D}.
For example, if the robot were an autonomous ground vehicle with GPS capabilities, then this path signifies the robot driving the total distance without any GPS updates, followed by a reduction its spatial uncertainty with a single update once the goal region is reached. 
The minimum path cost at the end of each iteration of the $\textit{for-loop}$ in Algorithm \ref{alg:igpp_body1} is depicted in Fig.~\ref{fig:1D_tree}~(b), where the red curve represents the average of $100$ independent simulations.
The path cost of each simulation approaches to the optimal cost.

\subsection{Two-Dimensional Asymmetric Simulation}\label{sec:2D_asym}
\begin{figure}[t]
\centering
{\includegraphics[trim = 0cm 0cm 0cm 0cm, clip=true, width=0.9\columnwidth]{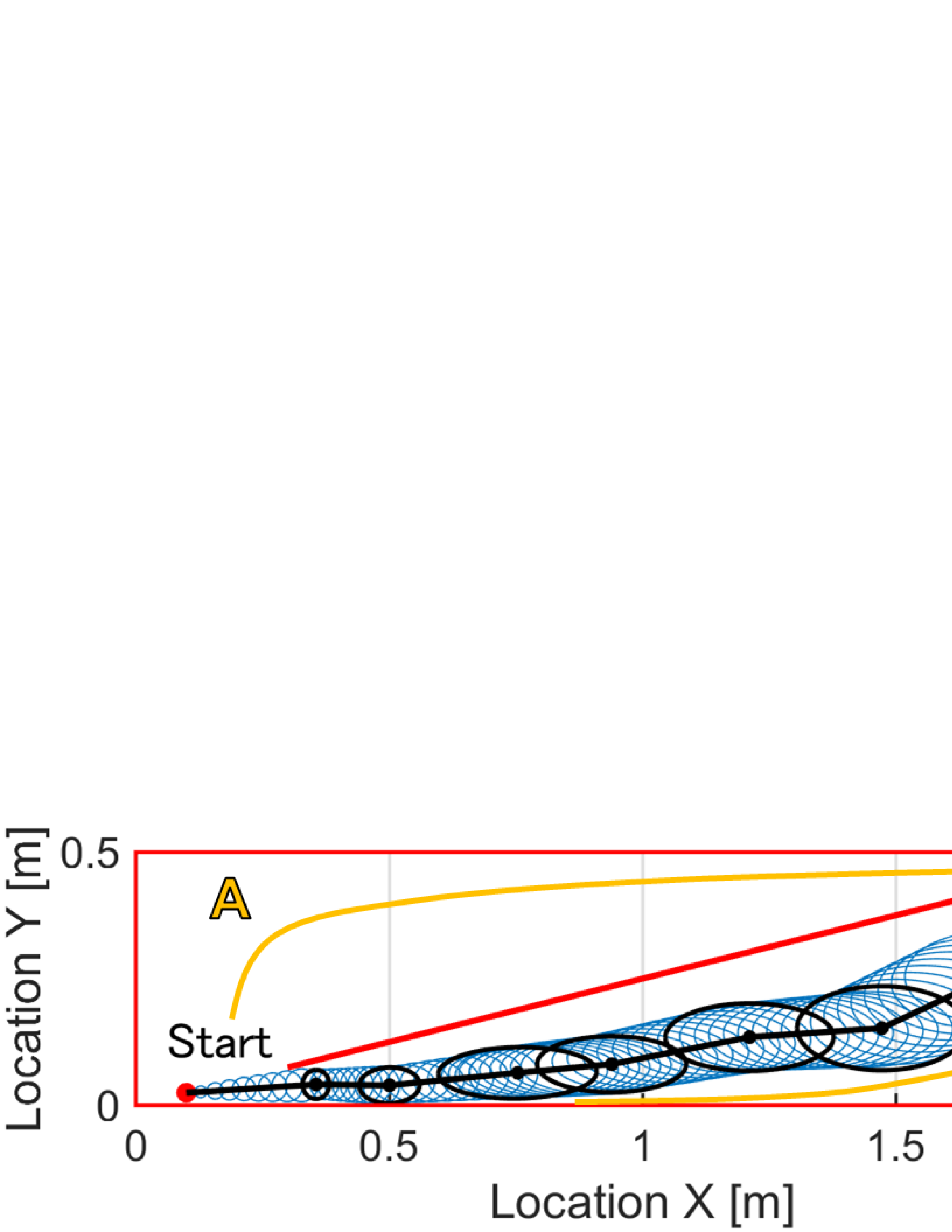}}
\caption{Results of the RI-RRT* algorithm with $10,000$ nodes in the two-dimensional space containing roughly two paths, A and B, separated by a diagonal wall. The black line is the shortest path with the associated covariance ellipses. The blue ellipses illustrate the propagation of covariance between nodes. The simulation was completed with $W = 10^{-3} I$ and $\chi^2$ covariance ellipses representing $90 \%$ certainty regions. The boundaries of the plots are considered as obstacles.}
\label{fig:2D_asym}
\end{figure}

The asymmetric characteristic of the RI-cost is demonstrated via a simulation in the two-dimensional configuration space with a diagonal wall, as seen in Fig.~\ref{fig:2D_asym}.
The initial position and covariance of the robot is depicted as a red dot, 
while the target region is illustrated as the black rectangle at the upper-right corner.

The path is generated by the RI-RRT* algorithm by sampling $10,000$ nodes. The corresponding sampled covariance ellipses are shown in black where the blue ellipses represent covariance propagation. 
As shown in Fig.~\ref{fig:2D_asym}, there are two options; path A requires the robot to move into a funnel-shaped corridor, while the path B moves out of a funnel.
In this setting, the RI-cost prefers the path B even though both A and B have the same Euclidean distance. This asymmetric behavior results from the fact that as the robot approaches the goal region path B requires a less severe uncertainty 
reduction compared to path A.
Similarly, by exchanging the start and goal positions, the RI-cost prefers path A over path B, thus displaying the directional dependency of our efficient sensing strategy.



\subsection{Two-Dimensional Simulation with Multiple Obstacles}\label{sec:2D_mult}
\begin{figure*}[t!]
    \centering
    \subfloat[$\alpha = 0$]
    {\includegraphics[trim = 0.3cm 0cm 1.5cm 0.82cm, clip=true, width=0.63\columnwidth]{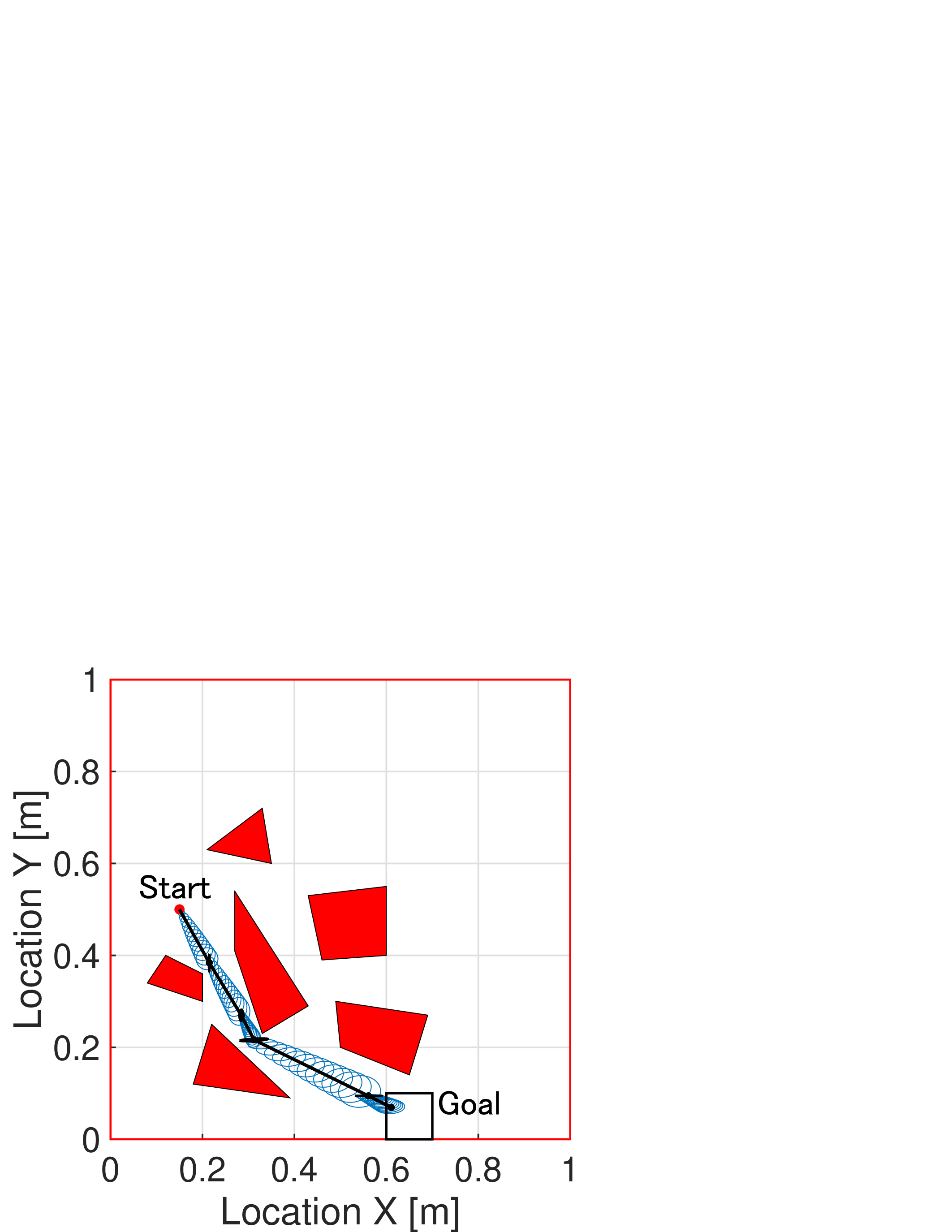}
    \label{fig:sim_snap_nocbf}} \quad 
    \subfloat[$\alpha = 0.1$]
    {\includegraphics[trim = 0.3cm 0cm 1.5cm 0.82cm, clip=true, width=0.63\columnwidth]{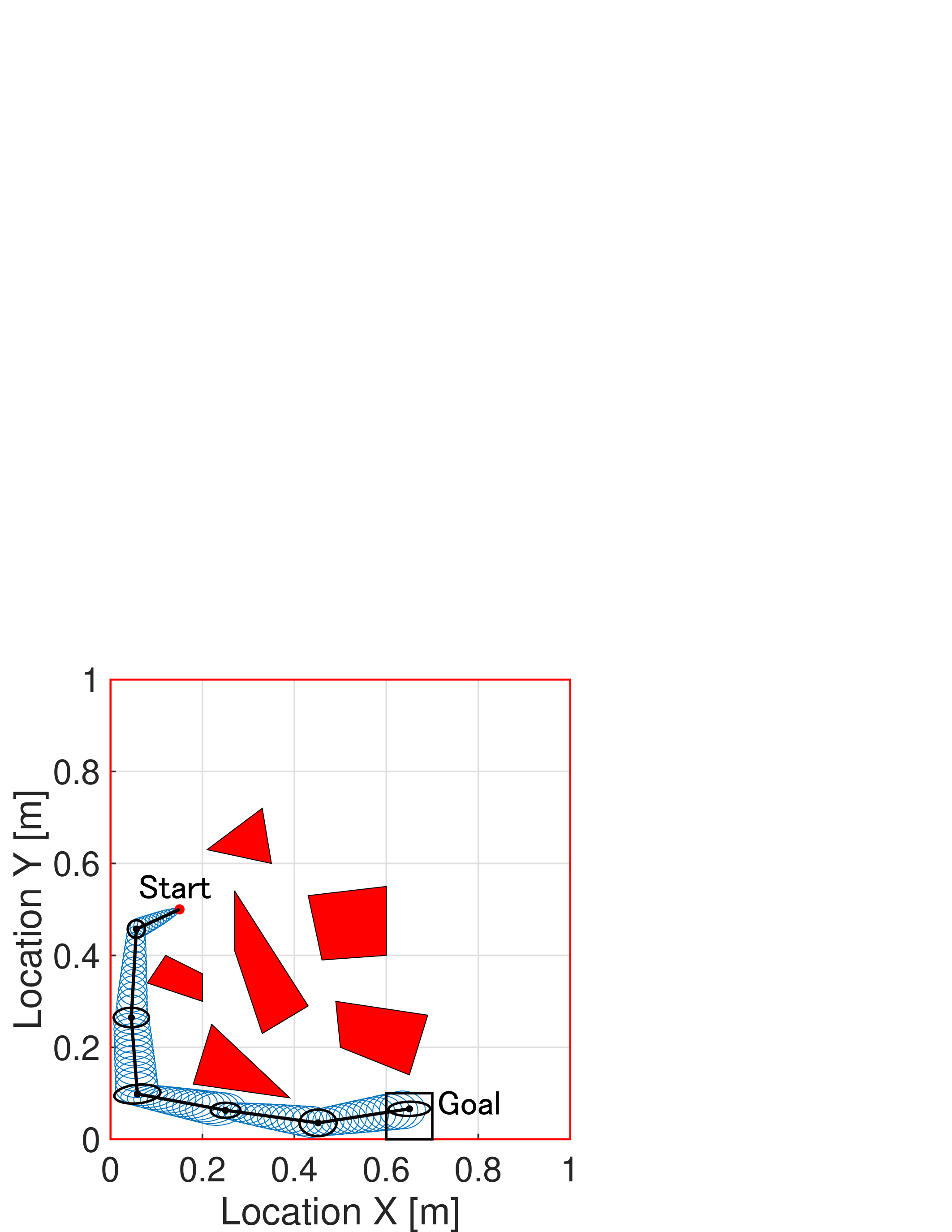}
    \label{fig:sim_snap_cbf19}} \quad
    \subfloat[$\alpha=0.3$]
    {\includegraphics[trim = 0.3cm 0cm 1.5cm 0.82cm, clip=true, width=0.63\columnwidth]{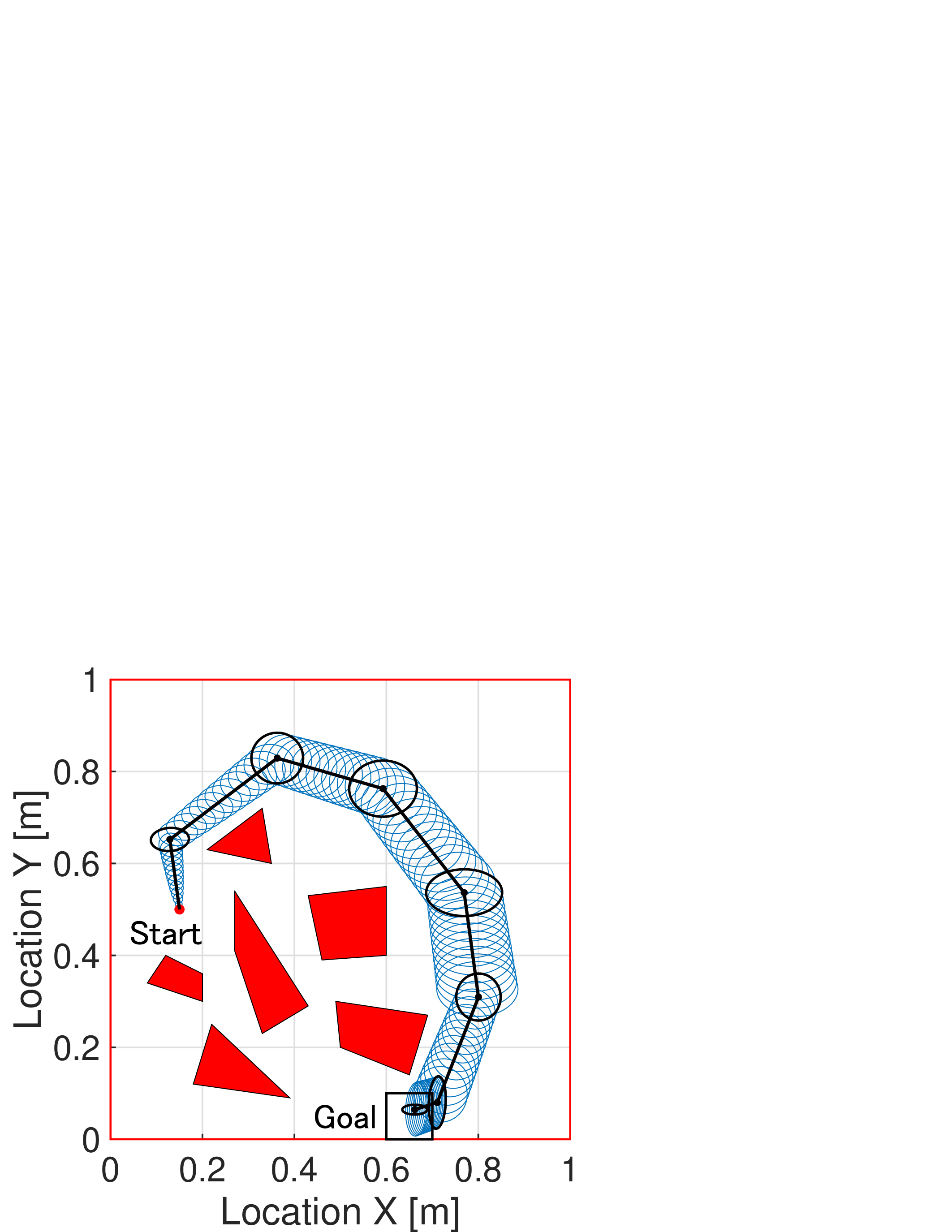}
    \label{fig:sim_snap_cbf20}}
    
    \caption{Results of simulation with $\alpha = 0, 0.1, 0.3$ under the existence of multiple obstacles. The simulation was completed with $W = 10^{-3} I$ and $\chi^2$ covariance ellipses representing $90 \%$ certainty regions. The boundaries of the plots are considered as obstacles.}
    \label{fig:2D_mult}
\end{figure*}


In a final demonstration, the RI-RRT* algorithms with $\alpha = 0, 0.1,$ and $0.3$ are implemented in a two-dimensional configuration space containing multiple obstacles in order to illustrate the effects of varying the information cost. All three paths in panels of Fig.~\ref{fig:2D_mult} are generated from 4,000 nodes.

As seen in Fig.~\ref{fig:2D_mult}~(a), when $\alpha = 0$ the algorithm simply finds a path that has the shortest Euclidean distance, 
even if that path requires frequent sensing actions. 
In contrast, the RI-RRT* algorithm with $\alpha = 0.3$ does not take a constrained pathway, and thus requires fewer sensor actuations in order to avoid obstacle collisions. As a result, the algorithm deviates from the shortest Euclidean distance path and allows the covariance to propagate safely, as seen in Fig.~\ref{fig:2D_mult}~(c).
A moderate path, illustrated in Fig.~\ref{fig:2D_mult}~(b), can be also obtained by choosing $\alpha = 0.1$.

\section{Conclusion}
\label{sec:Conclusion}
In this work, a novel RI cost for utilization in path-planning algorithms is presented. The cost accounts for both the path distance traversed (efficiency) and the amount of information which must be perceived by the robot. Information gained from perception is important in that it allows the robot to safely navigate obstacle-filled environments with confidence that collisions will be avoided. This method, in balancing path distance and perception costs, provides a simplicity-based path which can be tailored to mimic the results potentially generated by an expert human path-planner. Three numerical simulations were provided demonstrating these results.


Currently, the preliminary version of the RI-RRT* algorithm is optimized for computational efficiency by the aid of a \emph{branch-and-bound} technique. The authors note that utilizing other RRT* improvement methods, such as \emph{k-d trees} could further improve the computational speed of Algorithm \ref{alg:igpp_body1} and should be considered in future work.
In the same vein of future work, the authors note the importance of quantifying the impact that the user-defined constants, such as the distances which signify which nodes are neighbors, have on the results of the RI-RRT* algorithm. Also, the topic of path refinement should be further explored as RRT*-like algorithms converge asymptotically. 

It should be noted that once the RI-RRT* algorithm finds an initial feasible path, there exist iterative methods for path ``smoothing" which do not require additional node sampling. In a theorized hybrid method, RI-RRT* is first utilized to find some initial path and then an iterative method takes the initial path and ``smooths" towards the optimal path. Convergence benefits of iterative methods are often improved as the initial path guess is more similar to the optimal path, and a trade-off could be found in computational efficiency which results in the best time to switch between  algorithms.

\begin{appendices}

\section{Proof of Lemma \ref{lem:ineq}} \label{sec:AppenI}

For every $ t \in [0, T]$, set a partition $\mathcal{P}=(0, t, T)$. Then
\begin{align*}
    \|x(t)\|+\bar{\sigma}(P(t)) &\leq \|x(0)\|+\bar{\sigma}(P(0)) \nonumber \\
    & \qquad +  \|x(t)-x(0)\|+\bar{\sigma}(P(t)-P(0)) \nonumber \\
    &\leq \|x(0)\|+\bar{\sigma}(P(0)) \nonumber \\
    & \qquad +\|x(t)-x(0)\|+\bar{\sigma}(P(t)-P(0)) \nonumber \\
    & \qquad + \|x(T)-x(t)\|+\bar{\sigma}(P(T)-P(t)) \nonumber \\
    &\leq V(\gamma, \mathcal{P}) \leq |\gamma|_{\text{TV}}.
\end{align*}

\section{Proof of Theorem \ref{theo:continuity}}\label{sec:AppenII}
The proof is based on the following lemma:
\begin{lemma}
\label{lem:lipschitz}
Assume $d=1$. For each $(\epsilon,\delta)$ satisfying $0<\delta\leq\frac{\epsilon}{2}$, there exists a constant $L_\epsilon$ such that the inequality:
\begin{equation}
\begin{split}
    & |\mathcal{D}(x'_k, x'_{k+1}, P'_k, P'_{k+1})-\mathcal{D}(x_k, x_{k+1}, P_k, P_{k+1})| \\
    & \qquad \leq L_\epsilon \Bigl[ \left|(x'_{k+1}-x_{k+1})-(x'_k-x_k)\right|  \\ 
    & \qquad \qquad \qquad + \left|(P'_{k+1}-P_{k+1})-(P'_k-P_k)\right| \\
    & \qquad \qquad \qquad + \delta \left|P_{k+1}-P_{k}\right|+ \delta\left|x_{k+1}-x_k\right|\Bigr] \nonumber \\
\end{split}
\end{equation}
holds for all
\begin{equation}
    \begin{split}
        & x_k',x_{k+1}',x_k,x_{k+1} \in \mathbb{R} \; \; \text{and} \; \; P_k',P_{k+1}',P_k,P_{k+1} \geq \epsilon \nonumber
    \end{split}
\end{equation}
such that
\begin{equation}
    \begin{split}
        & \Delta x_k := x_k' - x_k \leq \delta, \; \; \Delta x_{k+1} := x_{k+1}' - x_{k+1} \leq \delta \\
        & \Delta P_k := P_k' - P_k \leq \delta, \; \; \Delta P_{k+1} := P_{k+1}' - P_{k+1} \leq \delta. \nonumber
    \end{split}
\end{equation}
\end{lemma}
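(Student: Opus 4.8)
The plan is to reduce the statement to an explicit, one-dimensional formula for $\mathcal{D}$ in the $d=1$ case and then bound the difference quotient directly. First I would specialize \eqref{eq:def_D} to $d=1$: scalars replace matrices, $\hat P_k = P_k + |x_{k+1}-x_k| W$ with $W>0$ a fixed scalar, and the max-det program collapses to $Q_{k+1} = \min\{P_{k+1}, \hat P_k\}$, so that
\begin{equation}
\mathcal{D} = |x_{k+1}-x_k| + \frac{\alpha}{2}\Bigl[\log_2 \hat P_k - \log_2 \min\{P_{k+1}, \hat P_k\}\Bigr] = |x_{k+1}-x_k| + \frac{\alpha}{2}\bigl(\log_2 \hat P_k - \log_2 P_{k+1}\bigr)^+ . \nonumber
\end{equation}
So $\mathcal{D}$ is a fixed continuous function of the four scalars $(x_k,x_{k+1},P_k,P_{k+1})$, built from $|\cdot|$, $\log$, and $\max$ with a parameter $W$; the only place it can misbehave is where $P$-arguments approach $0$, which is exactly why the hypothesis $P_k,P_{k+1}\ge\epsilon$ (equivalently the state space $\mathbb{S}_{2\epsilon}^1$ with a margin) is imposed.

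Next I would split the increment $\mathcal{D}(x_k',x_{k+1}',P_k',P_{k+1}') - \mathcal{D}(x_k,x_{k+1},P_k,P_{k+1})$ into the control part and the information part and treat each on the region $\{P_\bullet \ge \epsilon,\ |\Delta x_\bullet|\le\delta,\ |\Delta P_\bullet|\le\delta\}$ with $\delta \le \epsilon/2$. The control-part difference is $\bigl||x_{k+1}'-x_k'| - |x_{k+1}-x_k|\bigr| \le |(x_{k+1}'-x_{k+1})-(x_k'-x_k)|$ by the reverse triangle inequality, which is already one of the allowed terms on the right-hand side. For the information part I would write it as $\frac{\alpha}{2}\bigl[(\log_2\hat P_k' - \log_2\hat P_k) - (\log_2 P_{k+1}' - \log_2 P_{k+1})\bigr]$ when both arguments of the $(\cdot)^+$ are nonneg (and handle the kink via $1$-Lipschitzness of $t\mapsto t^+$). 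Since all covariance arguments lie in $[\epsilon,\infty)$ — and under the perturbation bounds also in $[\epsilon/2,\infty)$ — $\log_2$ has derivative bounded by $1/(\epsilon\ln 2)$ there, so $\log_2$ is $\frac{1}{\epsilon\ln 2}$-Lipschitz; likewise $\hat P_k = P_k + |x_{k+1}-x_k|W$ depends Lipschitz-continuously on its inputs with constant $\max\{1,W\}$.

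The only genuinely delicate term is $\log_2\hat P_k' - \log_2\hat P_k$, because $\hat P_k$ contains the product $|x_{k+1}-x_k|\,W$: a first-order expansion produces not just $(\Delta P_k)$ and the "difference-of-differences" term $(\Delta x_{k+1}-\Delta x_k)$, but also a cross term proportional to $|x_{k+1}-x_k|$ times a perturbation of size $O(\delta)$ — and this is precisely why the claimed bound carries the extra summands $\delta|P_{k+1}-P_k|$ and $\delta|x_{k+1}-x_k|$. Concretely I would estimate $\bigl|\log_2\hat P_k' - \log_2\hat P_k\bigr| \le \frac{1}{(\epsilon/2)\ln 2}|\hat P_k' - \hat P_k|$ and then $|\hat P_k' - \hat P_k| \le |\Delta P_k| + W\bigl||x_{k+1}'-x_k'| - |x_{k+1}-x_k|\bigr| \le |\Delta P_k| + W|\Delta x_{k+1}-\Delta x_k|$; combining across the two $\log$ terms and the control term, and absorbing everything into a single constant $L_\epsilon$ depending only on $\epsilon$, $\alpha$, and $W$, yields the stated inequality. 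In fact this route gives the bound \emph{without} the $\delta|P_{k+1}-P_k|$ and $\delta|x_{k+1}-x_k|$ terms, so those are harmless slack — presumably the authors keep them because a cruder bookkeeping (e.g., expanding $\log_2$ about $\hat P_k$ rather than estimating the difference directly, or tracking the $(\cdot)^+$ kink with a $\delta$-enlarged neighborhood) naturally produces them, and they are exactly the form needed to telescope against $V(\gamma;\mathcal P)$ when summing over a partition in the proof of Theorem~\ref{theo:continuity}. The main obstacle is therefore not any single estimate but organizing the $(\cdot)^+$ nonsmoothness and the bilinear $|x_{k+1}-x_k|W$ coupling so that every leftover term lands in one of the four allowed buckets; once that bookkeeping is fixed, each piece is a one-line Lipschitz bound on $[\epsilon/2,\infty)$.
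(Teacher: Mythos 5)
There is a genuine gap, and it sits exactly where you declare victory. Your setup is fine: the explicit one-dimensional formula $\mathcal{D}=|x_{k+1}-x_k|+\frac{\alpha}{2}\bigl(\log_2\hat P_k-\log_2 P_{k+1}\bigr)^+$ is correct, the reverse triangle inequality handles the control part, and replacing the paper's four-case analysis by the $1$-Lipschitzness of $t\mapsto t^+$ is a legitimate (and cleaner) way to reduce everything to the case where both information costs are active. The problem is your estimate of the remaining difference of logarithms. By bounding $|\log_2\hat P_k'-\log_2\hat P_k|$ and $|\log_2 P_{k+1}'-\log_2 P_{k+1}|$ \emph{separately}, you arrive at a bound of the form $C\bigl(|\Delta x_{k+1}-\Delta x_k|+|\Delta P_k|+|\Delta P_{k+1}|\bigr)$. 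This is a true inequality, but it is not the inequality of the lemma: the lemma's right-hand side contains only the difference $|\Delta P_{k+1}-\Delta P_k|$ together with the $\delta$-weighted terms, and $|\Delta P_k|+|\Delta P_{k+1}|$ is not dominated by any combination of those four terms (take $\Delta P_k=\Delta P_{k+1}=\delta$, $P_{k+1}=P_k$, $x_{k+1}=x_k$: your bound is $2C\delta>0$ while the lemma's right-hand side is $0$). Your assertion that this route ``gives the bound without the $\delta|P_{k+1}-P_k|$ and $\delta|x_{k+1}-x_k|$ terms, so those are harmless slack'' is therefore backwards: those terms are not slack, they are the entire point of the lemma's formulation.

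The reason this matters is visible in the downstream use in Theorem~\ref{theo:continuity}: the per-step bound is summed over an arbitrary partition, and only terms of the form $|\Delta x_{k+1}-\Delta x_k|$ and $|\Delta P_{k+1}-\Delta P_k|$ telescope into $V(\gamma'-\gamma;\mathcal P)\le|\gamma'-\gamma|_{\text{TV}}$, while $\delta|P_{k+1}-P_k|$ and $\delta|x_{k+1}-x_k|$ sum to $\delta V(\gamma;\mathcal P)\le\delta|\gamma|_{\text{TV}}$. A sum of the form $\sum_k\bigl(|\Delta P_k|+|\Delta P_{k+1}|\bigr)$ can be as large as $2N\delta$ and blows up as the partition is refined. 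The missing idea is the recombination step in the paper's Case 1: after factoring $P_k$ out of both logarithms, one writes $\frac{\Delta P_k}{P_k}-\frac{(P_k+|x_{k+1}-x_k|)\Delta P_{k+1}}{P_kP_{k+1}}=\frac{\Delta P_k-\Delta P_{k+1}}{P_k}+\frac{\Delta P_{k+1}\,(P_{k+1}-P_k-|x_{k+1}-x_k|)}{P_kP_{k+1}}$, so that the leading term is the difference-of-differences and the residual is bounded by $\frac{\delta}{\epsilon^2}\bigl(|P_{k+1}-P_k|+|x_{k+1}-x_k|\bigr)$ — which is precisely where the two $\delta$-weighted terms come from. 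Without some version of this recombination your argument does not establish the lemma as stated, nor a substitute strong enough for the continuity theorem.
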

\begin{proof}
    For simplicity, we assume $\alpha=W=1$, but the extension of the following proof to general cases is straightforward. In what follows, we write
\begin{align*}
&\mathcal{D}_{\text{info}}(x_k,x_{k+1},P_k,P_{k+1})\\
&:=\max \left\{0,\frac{1}{2}\log_2(P_k+\left|x_{k+1}-x_k\right|) - \frac{1}{2}\log_2(P_{k+1})\right\}.
\end{align*}
    We consider four different cases depending on the signs of $\mathcal{D}_{\text{info}}(x_k',x_{k+1}',P_k',P_{k+1}')$ and $\mathcal{D}_{\text{info}}(x_k,x_{k+1},P_k,P_{k+1})$.
    
    \underline{Case 1}: First, we consider the case with
    \begin{align*}
    &\mathcal{D}_{\text{info}}(x_k',x_{k+1}',P_k',P_{k+1}')>0 \text{ and } \\ 
    &\mathcal{D}_{\text{info}}(x_k,x_{k+1},P_k,P_{k+1})>0.
    \end{align*}
In this case:
    \begin{equation} \label{eq:one} \small
        \begin{split} 
            & \left|\mathcal{D}(x'_k, x'_{k+1}, P'_k, P'_{k+1})-\mathcal{D}(x_k, x_{k+1}, P_k, P_{k+1})\right| \\
            & =\bigg| \left|x_{k+1}+\Delta x_{k+1}-x_k-\Delta x_k\right|-\left|x_{k+1}-x_k\right| \\
            & \quad + \left. \frac{1}{2}\log_2(P_k+\Delta P_k + \left|x_{k+1}+\Delta x_{k+1}-x_k-\Delta x_k\right|) \right. \\
            & \quad \left. - \frac{1}{2}\log_2(P_{k+1}+\Delta P_{k+1}) - \frac{1}{2}\log_2(P_k+\left|x_{k+1}-x_k\right|) \right. \\
            & \quad +  \frac{1}{2}\log_2(P_{k+1}) \bigg| \\
            &  \leq \bigg| \left|x_{k+1}+\Delta x_{k+1}-x_k-\Delta x_k\right|-\left|x_{k+1}-x_k\right| \bigg| \\
            & \quad + \frac{1}{2}\left| \log_2\left(1 + \frac{\Delta P_k}{P_k} + \frac{\left|x_{k+1}+\Delta x_{k+1}-x_k-\Delta x_k\right|}{P_k} \right) \right.\\
            &\left. \quad - \log_2\left( 1 + \frac{\left|x_{k+1}-x_k\right|}{P_k} + \frac{P_k+\left|x_{k+1}-x_k\right|}{P_k P_{k+1}} \Delta P_{k+1} \right) \right|.
        \end{split}
    \end{equation}
    Using the fact that $\left| |a+b|-|a| \right| \leq \left| b \right|$ for $a,b\in\mathbb{R}$, we have:
    \begin{equation}
    \begin{split}
        &\left| \left|x_{k+1}+\Delta x_{k+1}-x_k-\Delta x_k\right|-\left|x_{k+1}-x_k\right| \right| \\
        & \qquad \leq \left| \Delta x_{k+1}- \Delta x_k \right|. \nonumber
    \end{split}
    \end{equation}
    Noticing that the arguments in the logarithmic terms in (\ref{eq:one}) are $\geq \frac{1}{2}$, and using the fact that $\left| \log_2(a)-\log_2(b)\right| \leq 2 \log_2(e)|a-b|, \; \forall a,b\geq \frac{1}{2}$, we have:
    \begin{equation} 
        \begin{split}
            & \frac{1}{2}\left| \log_2\left(1 + \frac{\Delta P_k}{P_k} + \frac{\left|x_{k+1}+\Delta x_{k+1}-x_k-\Delta x_k\right|}{P_k} \right) \right.\\
            &\left. \quad - \log_2\left( 1 + \frac{\left|x_{k+1}-x_k\right|}{P_k} + \frac{P_k+\left|x_{k+1}-x_k\right|}{P_k P_{k+1}} \Delta P_{k+1} \right) \right|\\
            & \leq \log_2(e) \left| \frac{\left|x_{k+1}+\Delta x_{k+1}-x_k-\Delta x_k\right|}{P_k} - \frac{\left|x_{k+1}-x_k\right|}{P_k} \right.\\
            &\left. \quad + \frac{\Delta P_k}{P_k} - \frac{P_k+\left|x_{k+1}-x_k\right|}{P_k P_{k+1}}\Delta P_{k+1} \right| \\
            & \leq \frac{\log_2(e)}{P_k}\bigg| |x_{k+1}+\Delta x_{k+1}-x_k-\Delta x_k| - |x_{k+1}-x_k| \bigg| \\
            & \quad +\! \log_2(e) \bigg| \frac{\Delta P_k \!-\! \Delta P_{k+1}}{P_k} \\ &\hspace{2cm}+\frac{P_{k+1}\!-\!P_k\!-\!|x_{k+1}\!-\!x_k|}{P_k P_{k+1}} \Delta P_{k+1}\bigg| \\
            & \leq \frac{\log_2(e)}{P_k} | \Delta x_{k+1} - \Delta x_k | + \frac{\log_2(e)}{P_k} | \Delta P_{k+1} - \Delta P_k | \\
            & \quad + \frac{\log_2(e)\left| \Delta P_{k+1} \right|}{P_k P_{k+1}} \big| P_{k+1}-P_k - |x_{k+1}-x_k| \big| \\
            & \leq \frac{\log_2(e)}{P_k} | \Delta x_{k+1} - \Delta x_k | + \frac{\log_2(e)}{P_k} | \Delta P_{k+1} - \Delta P_k | \\
            & \quad + \frac{\log_2(e)|\Delta P_{k+1}|}{P_k P_{k+1}} | x_{k+1}-x_k | \\
            & \quad + \frac{\log_2(e)|\Delta P_{k+1}|}{P_k P_{k+1}} | P_{k+1}-P_k | \\
            & \leq \frac{\log_2(e)}{\epsilon} | \Delta x_{k+1} - \Delta x_k | + \frac{\log_2(e)}{\epsilon} | \Delta P_{k+1} - \Delta P_k | \\
            & \quad + \frac{\log_2(e)\delta}{\epsilon^2} | x_{k+1}-x_k | + \frac{\log_2(e)\delta}{\epsilon^2} | P_{k+1}-P_k |
            \nonumber
        \end{split}
    \end{equation}
    Therefore,
    \begin{equation}
        \begin{split}
            & |\mathcal{D}(x'_k, x'_{k+1}, P'_k, P'_{k+1})-\mathcal{D}(x_k, x_{k+1}, P_k, P_{k+1})| \\
            & \leq \left(1 + \frac{\log_2(e)}{\epsilon}\right) | \Delta x_{k+1} - \Delta x_k | \\
            & \quad + \frac{\log_2(e)}{\epsilon} | \Delta P_{k+1} - \Delta P_k | \\
            & \quad + \frac{\log_2(e)\delta}{\epsilon^2} | x_{k+1}-x_k | + \frac{\log_2(e)\delta}{\epsilon^2} | P_{k+1}-P_k | \label{eq:two}
        \end{split}
    \end{equation}
    
    \underline{Case 2}: Next, we consider the case with \begin{subequations}
    \begin{align}
    \mathcal{D}_{\text{info}}(x_k',x_{k+1}',P_k',P_{k+1}')&>0 \text{ and } \label{eq:13a} \\ \mathcal{D}_{\text{info}}(x_k,x_{k+1},P_k,P_{k+1})&=0. \label{eq:13b}
    \end{align}
    \end{subequations}
    Notice that \eqref{eq:13b} implies $P_k+|x_{k+1}-x_k|-P_{k+1} \leq 0$. In this case:
    \begin{subequations}
        \begin{align}
            & |\mathcal{D}(x'_k, x'_{k+1}, P'_k, P'_{k+1})-\mathcal{D}(x_k, x_{k+1}, P_k, P_{k+1})| \nonumber \\
            & = \bigg| |x_{k+1}+\Delta x_{k+1}-x_k-\Delta x_k|- |x_{k+1}-x_k| \nonumber \\
            & \quad + \left. \frac{1}{2}\log_2\left(P_k + \Delta P_k + \left|x_{k+1}+\Delta x_{k+1}-x_k-\Delta x_k\right| \right) \right. \nonumber \\
            & \quad - \frac{1}{2}\log_2(P_{k+1}+\Delta P_{k+1}) \bigg| \\
            & \leq \left| \Delta x_{k+1}- \Delta x_k \right| - \frac{1}{2}\log_2(P_{k+1}+\Delta P_{k+1}) \nonumber \\
            & \quad + \frac{1}{2}\log_2(P_k + \Delta P_k + |x_{k+1}+\Delta x_{k+1} - x_k-\Delta x_k|) \label{eq:three}\\
            &  \leq \left| \Delta x_{k+1}- \Delta x_k \right| \nonumber \\
            & \quad + \frac{\log_2(e)}{\epsilon}(P_k + \Delta P_k + |x_{k+1}+\Delta x_{k+1} - x_k-\Delta x_k| \nonumber \\
            & \quad - P_{k+1}-\Delta P_{k+1} ) \label{eq:four}\\
            &  = \left| \Delta x_{k+1}- \Delta x_k \right| \nonumber \\
            & \quad + \frac{\log_2(e)}{\epsilon}\! \left( P_k\!+\!\left|x_{k+1}-x_k\right|-P_{k+1}+\Delta P_k - \Delta P_{k+1} \right. \nonumber \\
            & \quad \left. +\left| x_{k+1} +\Delta x_{k+1}-x_k-\Delta x_k \right|-\left|x_{k+1}-x_k \right| \right) \\
            & \leq \left|\Delta x_{k+1}-\Delta x_k \right| \nonumber \\
            & \quad +\frac{\log_2(e)}{\epsilon}\bigg| | x_{k+1} +\Delta x_{k+1}-x_k-\Delta x_k | \\ \nonumber
            & \hspace{2cm}-|x_{k+1}-x_k |  \bigg| \nonumber \\
            & \quad + \frac{\log_2(e)}{\epsilon} \left| \Delta P_{k+1}- \Delta P_k \right| \\
            & \leq \left(1+\frac{\log_2(e)}{\epsilon}\right) \left| \Delta x_{k+1}- \Delta x_k \right| \nonumber \\ 
            &\quad +\frac{\log_2(e)}{\epsilon}\left| \Delta P_{k+1}- \Delta P_k \right| \label{eq:five}
        \end{align}
    \end{subequations}
    In step (\ref{eq:three}), we have used the fact that the difference between the two logarithmic terms is positive, because of the hypothesis \eqref{eq:13a}. In step (\ref{eq:four}), we used the fact that $\log_2 a - \log_2 b \leq \frac{2\log_2(e)}{\epsilon}(a-b)$, for $a>b\geq\frac{\epsilon}{2}$.
    
    \underline{Case 3}: Next, we consider the case with 
    \begin{subequations}
    \begin{align}
        \mathcal{D}_{\text{info}}(x_k',x_{k+1}',P_k',P_{k+1}')&=0 \text{ and } \label{eq:15a}\\ \mathcal{D}_{\text{info}}(x_k,x_{k+1},P_k,P_{k+1})&>0. \label{eq:15b}
    \end{align}
    \end{subequations}
    The first hypothesis \eqref{eq:15a} implies:
    \begin{align} 
        P_k+\Delta P_k + |x_{k+1}+\Delta x_{k+1}-x_k-\Delta x_k|& \nonumber \\
        -P_{k+1}-\Delta P_{k+1}& \leq 0. \label{eq:six} 
    \end{align}
    Using 
    \begin{align*}
        \left| x_{k+1}-x_k\right| &- \left| \Delta x_{k+1}-\Delta x_k\right| \\
        &\leq \left| x_{k+1} + \Delta x_{k+1}-x_k-\Delta x_k\right|,
    \end{align*}
    one can deduce from (\ref{eq:one}) that:
    \begin{equation} \label{eq:seven}
        \begin{split}
            &P_k + |x_{k+1}-x_k|-P_{k+1} \\
            &\leq \Delta P_{k+1} - \Delta P_k+ |\Delta x_{k+1}- \Delta x_k| \\
            &\leq |\Delta P_{k+1} - \Delta P_k|+ |\Delta x_{k+1}- \Delta x_k|.
        \end{split}
    \end{equation}
    This results in:
    \begin{subequations} 
        \begin{align}
            & |\mathcal{D}(x'_k, x'_{k+1}, P'_k, P'_{k+1})-\mathcal{D}(x_k, x_{k+1}, P_k, P_{k+1})| \nonumber \\
            &  = \bigg| \left| x_{k+1}-\Delta x_{k+1} - x_k -\Delta x_k \right| - \left|x_{k+1}-x_k\right|  \nonumber \\
            & \quad -  \frac{1}{2}\log_2(P_k+|x_{k+1}-x_k|) + \frac{1}{2}\log_2(P_{k+1}) \bigg| \\
            & \leq \left| \Delta x_{k+1}-\Delta x_k \right| \nonumber \\
            &\quad + \frac{1}{2}\log_2(P_k+|x_{k+1}-x_k|)  -\frac{1}{2}\log_2(P_{k+1}) \label{eq:eight} \\
            &  \leq \left| \Delta x_{k+1}-\Delta x_k \right| + \frac{\log_2(e)}{2\epsilon}(P_k+|x_{k+1}-x_k|)-P_{k+1}) \label{eq:nine} \\
            & \leq \left( 1 + \frac{\log_2(e)}{2\epsilon}\right)\left| \Delta x_{k+1}-\Delta x_k \right| \nonumber \\
            & \quad + \frac{\log_2(e)}{2\epsilon}\left|\Delta P_{k+1}-\Delta P_{k}\right| \label{eq:ten} 
        \end{align}
    \end{subequations}
    In (\ref{eq:eight}) we used the fact that the difference between the two logarithmic terms is positive. In (\ref{eq:nine}) we used the fact that $\log_2 a - \log_2 b \leq \frac{\log_2(e)}{\epsilon}(a-b)$, for $a>b\geq\epsilon$. Finally, the inequality (\ref{eq:seven}) was used in step (\ref{eq:ten}).
    
    \underline{Case 4}:
    Finally, we consider the case with 
    \begin{align*}
    \mathcal{D}_{\text{info}}(x_k',x_{k+1}',P_k',P_{k+1}')&=0 \text{ and } \\ \mathcal{D}_{\text{info}}(x_k,x_{k+1},P_k,P_{k+1})&=0.
    \end{align*}
 In this case:
    \begin{equation} \label{eq:eleven}
        \begin{split}
            & |\mathcal{D}(x'_k, x'_{k+1}, P'_k, P'_{k+1})-\mathcal{D}(x_k, x_{k+1}, P_k, P_{k+1})| \\
            & \quad =\big|\left| x_{k+1}-\Delta x_{k+1} - x_k -\Delta x_k \right|- |x_{k+1}-x_k| \big| \\
            & \quad \leq |\Delta x_{k+1}- \Delta x_k |
        \end{split}
    \end{equation}
    To summarize, (\ref{eq:two}), (\ref{eq:five}), (\ref{eq:ten}), and (\ref{eq:eleven}) are sufficient to be able to choose $L_\epsilon = \max \{1 + \frac{\log_2(e)}{\epsilon}, \frac{\log_2(e)}{\epsilon^2}\}$ to obtain the desired result.
\end{proof}

\noindent \textbf{\emph{Proof of Theorem \ref{theo:continuity}}} \\
Suppose $\gamma(t)=(x(t), P(t))$ and $\gamma'(t)=(x'(t), P'(t))$. In what follows, we consider the choice:
\begin{equation}
    \delta=\min\left\{\frac{\epsilon_0}{2L_\epsilon\left(1+|\gamma|_{\text{TV}}\right)}, \frac{\epsilon}{2}\right\}
\end{equation}
Since $|\gamma'-\gamma|_{\text{TV}}<\delta$, we have $\|\gamma'-\gamma\|_\infty<\delta$. In particular, for each $t\in[0,T]$, we have $|x'(t)-x(t)|<\delta$ and $|P'(t)-P(t)|<\delta$. Moreover:
\begin{equation}
    \begin{split}
        |P'(t)|&=|P(t)+P'(t)-P(t)| \\
        &\geq |P(t)|-|P'(t)-P(t)|> 2\epsilon-\delta>\epsilon. \nonumber
    \end{split}
\end{equation}
Therefore, $\forall t\in[0,T]$, we have both $P(t)>\epsilon$ and $P'(t)>\epsilon$. Let $\mathcal{P}=(0=t_0<t_1<\cdots < t_N=T)$ be a partition and define:
\begin{equation}
    c(\gamma; \mathcal{P}):=\sum_{k=0}^{N-1} \mathcal{D}(x(t_k), x(t_{k+1}), P(t_k), P(t_{k+1})).\nonumber
\end{equation}
For any partition $\mathcal{P}$, the following chain of inequalities holds:
\begin{subequations}
\label{eq:chain}
\begin{align}
    &|c(\gamma'; \mathcal{P})-c(\gamma; \mathcal{P})| \nonumber \\
    &=\left|\sum_{k=0}^{N-1}\mathcal{D}(x'(t_k), x'(t_{k+1}), P'(t_k), P'(t_{k+1}))\right. \nonumber \\
    & \qquad \qquad -\mathcal{D}(x(t_k), x(t_{k+1}), P(t_k), P(t_{k+1})) \Bigg| \\
    &\leq  \sum_{k=0}^{N-1} \left|\mathcal{D}(x'(t_k), x'(t_{k+1}), P'(t_k), P'(t_{k+1}))\right. \nonumber \\
    &\left. \qquad \qquad -\mathcal{D}(x(t_k), x(t_{k+1}), P(t_k), P(t_{k+1})) \right| \\
    &\leq L_{\epsilon}  \sum_{k=0}^{N-1} \Bigl[
    |(x'(t_{k+1})-x(t_{k+1}))-(x'(t_k)-x(t_k))| \Bigr. \nonumber \\
    & \qquad  \left.+|(P'(t_{k+1})-P(t_{k+1}))-(P'(t_k)-P(t_k))| \right. \nonumber \\
    & \qquad  +\Bigl. \delta | P(t_{k+1})-P(t_k)| + \delta | x(t_{k+1})-x(t_k)|\Bigr]\label{eq:15d} \\
    &= L_{\epsilon}  (V(\gamma'-\gamma,\mathcal{P}) + \delta V(\gamma,\mathcal{P}))  \\
    &\leq L_{\epsilon} \left(|\gamma'-\gamma|_{\text{TV}}+ \delta|\gamma|_{\text{TV}}\right) \\
    &< L_{\epsilon}  \left(\delta+\delta|\gamma|_{\text{TV}}\right) \\
    &\leq L_{\epsilon}  \left(1+|\gamma|_{\text{TV}}\right)\left( \frac{\epsilon_0}{2L_\epsilon(1+|\gamma|_{\text{TV}})} \right) \\
    &=\frac{\epsilon_0}{2} 
\end{align}
\end{subequations}
The inequality (\ref{eq:15d}) follows from Lemma \ref{lem:lipschitz}. Let $\{\mathcal{P}_i\}_{i\in\mathbb{N}}$ and $\{\mathcal{P}'_i\}_{i\in\mathbb{N}}$ be sequences of partitions such that:
\begin{equation}
\label{eq:p_sequence}
    \lim_{i \rightarrow \infty} c(\gamma; \mathcal{P}_i) =c(\gamma), \;\; \lim_{i \rightarrow \infty} c(\gamma'; \mathcal{P}'_i) =c(\gamma'),
\end{equation}
and let $\{\mathcal{P}''_i\}_{i\in\mathbb{N}}$ be the sequence of partitions such that for each $i\in\mathbb{N}$, $\mathcal{P}''_i$ is a common refinement of $\mathcal{P}_i$ and $\mathcal{P}'_i$. Since both
\begin{align*}
& c(\gamma;\mathcal{P}_i) \!\leq\! c(\gamma;\mathcal{P}''_i) \leq c(\gamma)\; \text{and} \; c(\gamma';\mathcal{P}'_i) \leq c(\gamma;\mathcal{P}''_i)  \leq c(\gamma)\nonumber
\end{align*}
hold for each $i\in\mathbb{N}$, \eqref{eq:p_sequence} implies
\begin{equation}
\label{eq:p_sequence2}
\lim_{i \rightarrow \infty} c(\gamma; \mathcal{P}''_i) =c(\gamma), \;\;
\lim_{i \rightarrow \infty} c(\gamma'; \mathcal{P}''_i) =c(\gamma').
\end{equation}
Now, since the chain of inequalities \eqref{eq:chain} holds for any partitions, 
\[
|c(\gamma; \mathcal{P}''_i) - c(\gamma'; \mathcal{P}''_i) | < \frac{\epsilon_0}{2}
\]
holds for all $i\in\mathbb{N}$. This results in:
\begin{align}
|c(\gamma)-c(\gamma')|&=\lim_{i\rightarrow \infty} |c(\gamma; \mathcal{P}''_i) - c(\gamma'; \mathcal{P}''_i) |  \label{eq:17} \nonumber \\
&\leq \frac{\epsilon_0}{2} < \epsilon_0.
\end{align}
where  \eqref{eq:17} follows from \eqref{eq:p_sequence2}.\\

\section{One-Dimensional Problem Optimal Path}\label{sec:AppenIII}
Consider taking the single perception optimal path $\gamma_1$ in Fig. \ref{fig:1D_tree}~(a): 
\begin{equation} 
    (x_{0},P_{0}) \rightarrow (x_{T},P_{T}) \nonumber
\end{equation}
and dividing it into the combination of two sub-paths $\gamma_2$:
\begin{equation} 
\begin{split}
    &(x_{0},P_{0}) \rightarrow (x_{a},P_{a}) \rightarrow (x_{T},P_{T}) \nonumber \\
    &\text{such that} \; \; \hat{P}_0^{'}=P_0+ \beta \|x_{T}-x_0\| W> P_a, \nonumber \\ 
    & \qquad \qquad \hat{P}_a^{'}=P_a+ (1-\beta)\|x_{T}-x_0\| W > P_T \nonumber
\end{split}
\end{equation} 
where $\beta \in (0,1)$ is a constant which denotes where in $\gamma_2$ the additional sensing action takes place. The combination of the divided sub-paths have in the same initial ($z_{0}$) and ending ($z_{T}$) states as the original path, but also achieve an intermediate state ($z_{a}$).

Path $\gamma_1$ has a total RI cost:
\begin{subequations}
\begin{align}
    \mathcal{D}(\gamma_1)&=\|x_{T}-x_0\|+\frac{\alpha}{2}\left[\log_2\hat{P}_0 -\log_2 P_{T} \right] \nonumber,
\end{align}
\end{subequations}
where, $\hat{P}_0= P_0 +\|x_{T}-x_0\| W$. Likewise, the path $\gamma_2$ has a length which is the summation of two information gains and while transitioning the same distance as $\gamma_1$.
\begin{subequations}
\begin{align}
    &\mathcal{D}(\gamma_2)=\|x_{T}-x_0\|+\frac{\alpha}{2}\left[\log_2\hat{P}_{0}^{'} -\log_2 P_{a}  \right] \nonumber \\
    & \qquad \qquad +\frac{\alpha}{2}\left[\log_2\hat{P}_{a}^{'} -\log_2 P_{T} \right]\nonumber
\end{align}
\end{subequations}
By comparing the costs between $\gamma_1$ and $\gamma_2$, it is possible to achieve:
\begin{subequations}\label{eq:tot_dist}
\begin{align}
        &\mathcal{D}(\gamma_1) - \mathcal{D}(\gamma_2)= \frac{\alpha}{2}\left[\log_2\hat{P}_0 -\log_2 P_{T} \right] \nonumber \\ 
        & \qquad-\frac{\alpha}{2}\left[\log_2\hat{P}_{0}^{'} -\log_2 P_{a}  \right] - \frac{\alpha}{2}\left[\log_2\hat{P}_{a'} -\log_2 P_{T} \right] \nonumber \\
        &= \frac{\alpha}{2} \left[ \log_2 \hat{P}_0 - \log_2\hat{P}_{0}^{'} \right] - \frac{\alpha}{2} \left[\log_2\hat{P}_{a'} -\log_2 P_{a} \right] \nonumber \\
        & = f(\hat{P}_{0}^{'})-f(P_a) < 0, \nonumber 
    \end{align}
\end{subequations}
where $f(P)\!=\!\frac{\alpha}{2} \left[ \log_2 (P+(1-\beta)\|x_{T}-x_0\| W) \!-\! \log_2 P \right] $. Last inequality follows the facts that $\hat{P}_{0}^{'} > P_a $ and $f(P)$ is a decreasing function ($\dv{f}{P}<0$).

\end{appendices}

\addtolength{\textheight}{-0cm}   



\bibliographystyle{IEEEtran}
\bibliography{ref_shorten.bib}

\begin{thebibliography}{10}
\providecommand{\url}[1]{#1}
\csname url@samestyle\endcsname
\providecommand{\newblock}{\relax}
\providecommand{\bibinfo}[2]{#2}
\providecommand{\BIBentrySTDinterwordspacing}{\spaceskip=0pt\relax}
\providecommand{\BIBentryALTinterwordstretchfactor}{4}
\providecommand{\BIBentryALTinterwordspacing}{\spaceskip=\fontdimen2\font plus
\BIBentryALTinterwordstretchfactor\fontdimen3\font minus
  \fontdimen4\font\relax}
\providecommand{\BIBforeignlanguage}[2]{{%
\expandafter\ifx\csname l@#1\endcsname\relax
\typeout{** WARNING: IEEEtran.bst: No hyphenation pattern has been}%
\typeout{** loaded for the language `#1'. Using the pattern for}%
\typeout{** the default language instead.}%
\else
\language=\csname l@#1\endcsname
\fi
#2}}
\providecommand{\BIBdecl}{\relax}
\BIBdecl

\bibitem{Pendleton2017}
S.~Pendleton, H.~Andersen, X.~Du, X.~Shen, M.~Meghjani, Y.~Eng, D.~Rus, and
  M.~Ang, ``Perception, planning, control, and coordination for autonomous
  vehicles,'' \emph{Machines}, vol.~5, no.~1, p.~6, 2017.

\bibitem{Pfeiffer2017}
M.~Pfeiffer, M.~Schaeuble, J.~Nieto, R.~Siegwart, and C.~Cadena, ``From
  perception to decision: A data-driven approach to end-to-end motion planning
  for autonomous ground robots,'' in \emph{Proc. IEEE Int. Conf. Robot.
  Autom.}, 2017, pp. 1527--1533.

\bibitem{Carlone2019}
L.~Carlone and S.~Karaman, ``Attention and anticipation in fast visual-inertial
  navigation,'' \emph{IEEE Trans. Robot.}, vol.~35, no.~1, pp. 1--20, 2019.

\bibitem{Alterovitz2016}
R.~Alterovitz, S.~Koenig, and M.~Likhachev, ``Robot planning in the real world:
  research challenges and opportunities,'' \emph{AI Magazine}, vol.~37, no.~2,
  pp. 76--84, 2016.

\bibitem{kuwata2008motion}
Y.~Kuwata, J.~Teo, S.~Karaman, G.~Fiore, E.~Frazzoli, and J.~How, ``Motion
  planning in complex environments using closed-loop prediction,'' \emph{AIAA
  Guid. Navi. Control Conf. Exhibit}, 2008.

\bibitem{van2011lqg}
J.~Van Den~Berg, P.~Abbeel, and K.~Goldberg, ``{LQG-MP}: Optimized path
  planning for robots with motion uncertainty and imperfect state
  information,'' \emph{Int. J. Robot. Res.}, vol.~30, no.~7, pp. 895--913,
  2011.

\bibitem{agha2014firm}
A.-A. Agha-Mohammadi, S.~Chakravorty, and N.~M. Amato, ``{FIRM}: Sampling-based
  feedback motion-planning under motion uncertainty and imperfect
  measurements,'' \emph{Int. J. Robot. Res.}, vol.~33, no.~2, pp. 268--304,
  2014.

\bibitem{Sims2003}
C.~A. Sims, ``Implications of rational inattention,'' \emph{J. Monetary
  Economics}, vol.~50, no.~3, pp. 665--690, 2003.

\bibitem{Shafieepoorfard2016}
E.~Shafieepoorfard, M.~Raginsky, and S.~P. Meyn, ``Rationally inattentive
  control of markov processes,'' \emph{SIAM J. Control Optimization}, vol.~54,
  no.~2, pp. 987--1016, 2016.

\bibitem{Shafieepoorfard2013}
E.~Shafieepoorfard and M.~Raginsky, ``Rational inattention in scalar lqg
  control,'' in \emph{Proc. Conf. Decision Control}, 2013, pp. 5733--5739.

\bibitem{Lavalle2006}
S.~M. LaValle, \emph{Planning algorithms}.\hskip 1em plus 0.5em minus
  0.4em\relax Cambridge {U}niversity {P}ress, 2006.

\bibitem{Karaman2010}
S.~Karaman and E.~Frazzoli, ``Incremental sampling-based algorithms for optimal
  motion planning,'' in \emph{Proc. Robot.: Sci. Syst.}, 2010.

\bibitem{marquez2008design}
J.~J. Marquez and M.~L. Cummings, ``Design and evaluation of path planning
  decision support for planetary surface exploration,'' \emph{J. Aerosp.
  Comput., Info., Comm.}, vol.~5, no.~3, pp. 57--71, 2008.

\bibitem{Hwang1992}
Y.~K. Hwang and N.~Ahuja, ``A potential field approach to path planning,''
  \emph{IEEE Trans. Robot. Autom.}, vol.~8, no.~1, pp. 23--32, 1992.

\bibitem{Kambhampati1986}
S.~Kambhampati and L.~Davis, ``Multiresolution path planning for mobile
  robots,'' \emph{IEEE J. Robot. Autom.}, vol.~2, no.~3, pp. 135--145, 1986.

\bibitem{Hauer2015}
F.~Hauer, A.~Kundu, J.~M. Rehg, and P.~Tsiotras, ``Multi-scale perception and
  path planning on probabilistic obstacle maps,'' in \emph{Proc. IEEE Int.
  Conf. Robot. Autom.}, 2015, pp. 4210--4215.

\bibitem{Lambert2003}
A.~Lambert and D.~Gruyer, ``Safe path planning in an uncertain-configuration
  space,'' in \emph{Proc. IEEE Int. Conf. Robot. Autom.}, vol.~3, 2003, pp.
  4185--4190.

\bibitem{Pepy2006}
R.~Pepy and A.~Lambert, ``Safe path planning in an uncertain-configuration
  space using {RRT},'' in \emph{Proc. IEEE/RSJ Int. Conf. Intell. Robots
  Syst.}, 2006, pp. 5376--5381.

\bibitem{Kloeden1992}
P.~Kloeden and E.~Platen, \emph{Numerical Solution of Stochastic Differential
  Equations}.\hskip 1em plus 0.5em minus 0.4em\relax Berlin: Springer, 1992.

\bibitem{lambert2003safe}
A.~Lambert and D.~Gruyer, ``Safe path planning in an uncertain-configuration
  space,'' in \emph{Proc. IEEE Int. Conf. Robot. Autom.}, vol.~3, 2003, pp.
  4185--4190.

\bibitem{pepy2006safe}
R.~Pepy and A.~Lambert, ``Safe path planning in an uncertain-configuration
  space using {RRT},'' in \emph{Proc. IEEE/RSJ Int. Conf. Intell. Robots
  Syst.}, 2006, pp. 5376--5381.

\bibitem{tanaka2016semidefinite}
T.~Tanaka, K.-K. Kim, P.~A. Parrilo, and S.~K. Mitter, ``Semidefinite
  programming approach to {G}aussian sequential rate-distortion trade-offs,''
  \emph{IEEE Trans. Automat. Control}, vol.~62, no.~4, pp. 1896--1910, 2016.

\bibitem{vandenberghe1998determinant}
L.~Vandenberghe, S.~Boyd, and S.-P. Wu, ``Determinant maximization with linear
  matrix inequality constraints,'' \emph{SIAM J. Matrix Analysis and
  Applications}, vol.~19, no.~2, pp. 499--533, 1998.

\bibitem{carothers2000real}
N.~L. Carothers, \emph{Real analysis}.\hskip 1em plus 0.5em minus 0.4em\relax
  Cambridge {U}niversity {P}ress, 2000.

\bibitem{Lavalle2001}
S.~M. LaValle and J.~J. Kuffner, ``Randomized kinodynamic planning,''
  \emph{Int. J. Robot. Res.}, vol.~20, no.~5, pp. 378--400, May 2001.

\bibitem{karaman2011anytime}
S.~Karaman, M.~R. Walter, A.~Perez, E.~Frazzoli, and S.~Teller, ``Anytime
  motion planning using the {RRT},'' in \emph{Proc. IEEE Int. Conf. Robot.
  Autom.}, 2011, pp. 1478--1483.

\bibitem{Karaman2011}
S.~Karaman and E.~Frazzoli, ``Sampling-based algorithms for optimal motion
  planning,'' \emph{Int. J. Robot. Res.}, vol.~30, no.~7, pp. 846--894, 2011.

\end{thebibliography}

\end{document}